



\documentclass[11pt]{article}

\setlength{\textwidth}{165mm}
\setlength{\textheight}{225mm}
\setlength{\oddsidemargin}{-5mm}
\setlength{\topmargin}{-5mm}

\usepackage{multirow}
\usepackage{amssymb,amsmath,amsthm}
\usepackage{color}
\usepackage{graphicx}
  \DeclareGraphicsExtensions{.jpeg,.png,.jpg,.eps,.pdf}
\usepackage{tikz}
\usepackage{epstopdf}
\usepackage{algorithm}
\usepackage{subcaption}
\usepackage{float}
\usepackage{multicol}
\usepackage[noend]{algpseudocode}
\usepackage{comment}


\def\oneb{{\bf 1}}
\def\1{{\bf 1}}

\def\eb{{\bf e}}


\def\N{{\cal N}}
\def\P{{\cal P}}

\def\T{{\cal T}}


\def\Vp{V_\perp}
\def\Vpt{V_\perp^\top}
\def\Ut{U^\top}


\def\R{{\mathbb R}}

\def\Sm{{\mathbb S}}


\def\al{\alpha}
\def\d{\delta}

\def\e{\epsilon}
\def\g{\gamma}
\def\GA{\Gamma}
\def\l{\lambda}

\def\OM{\Omega}

\def\s{\sigma}

\def\th{\theta}



\def\Bb{\bar{B}}
\def\Cb{\bar{C}}

\def\Zb{\bar{Z}}



\def\ap{\rightarrow}

\def\seq{\subseteq}

\def\bi{\{0,1\}}

\def\bino{\bi^{n_r \times n_c}}

\def\imp{\; \Longrightarrow \;}

\def\fa{\; \forall}

\def\drc{(d_r,d_c)}

\def\uh{\hat{u}}

\def\fnrc{\frac{1}{\alpha}}

\def\st{\mbox{ s.t. }}

\def\nm{\Vert}

\renewcommand{\and}{\mbox{$\wedge$}}


\newcommand{\bc}{\begin{center}}
\newcommand{\ec}{\end{center}}
\newcommand{\be}{\begin{equation}}
\newcommand{\ee}{\end{equation}}
\newcommand{\bd}{\begin{displaymath}}
\newcommand{\ed}{\end{displaymath}}
\newcommand{\ba}{\begin{array}}
\newcommand{\ea}{\end{array}}
\newcommand{\ben}{\begin{enumerate}}
\newcommand{\een}{\end{enumerate}}
\newcommand{\bit}{\begin{itemize}}
\newcommand{\eit}{\end{itemize}}
\newcommand{\beq}{\begin{eqnarray}}
\newcommand{\eeq}{\end{eqnarray}}
\newcommand{\btab}{\begin{tabular}}
\newcommand{\etab}{\end{tabular}}
\newcommand{\bfig}{\begin{figure}}
\newcommand{\efig}{\end{figure}}
\newcommand{\btp}{\begin{tikzpicture}}
\newcommand{\etp}{\end{tikzpicture}}
\newcommand{\bcm}{}


\newcommand{\argmin}{\operatornamewithlimits{arg min}}

\renewcommand{\mod}{~{\rm mod}~}



\newcommand{\nmm}[1]{ \nm #1 \nm }
\newcommand{\nmeu}[1]{ \nm #1 \nm_2 }
\newcommand{\nmeusq}[1]{ \nm #1 \nm_2^2 }

\newcommand{\nmF}[1]{ \nm #1 \nm_F }

\newcommand{\nmN}[1]{ \nm #1 \nm_N }

\newcommand{\nmS}[1]{ \nm #1 \nm_S }

\newcommand{\IP}[2]{ \langle #1 , #2 \rangle }
\newcommand{\IPF}[2]{ \langle #1 , #2 \rangle_F }




\newcommand{\rk}{{\rm{rank}}}

\newcommand{\tr}{{\rm{tr}}}

\def\Rno{\R^{n_r \times n_c}}
\def\Rnrc{\R^{n_r \times n_c}}

\def\Xh{\hat{X}}

\def\nmsl1{\nm_{{\rm SL1}}}


\definecolor{verm}{rgb}{0.6,0.2,0.2}
\definecolor{purp}{rgb}{0.3,0.1,0.6}
\definecolor{purple}{rgb}{0.4,0.0,0.6}
\definecolor{bggreen}{rgb}{0.1,0.3,0.1}
\definecolor{dgreen}{rgb}{0.1,0.6,0.1}
\definecolor{black}{rgb}{0.0,0.0,0.0}
\definecolor{crim}{rgb}{0.3,0.1,0.1}
\definecolor{dred}{rgb}{0.5,0.1,0.1}


\definecolor{Blue}{cmyk}{0.65,0.13,0,0}
\definecolor{Black}{cmyk}{0,0,0,1}
\definecolor{Red}{cmyk}{0,1,1,0}
\definecolor{Green}{cmyk}{1,0,1,0}
\definecolor{Orange}{cmyk}{0,0.61,0.87,0.1}
\definecolor{Fuchsia}{cmyk}{0.47,0.91,0,0.08}
\definecolor{PineGreen}{cmyk}{0.92,0,0.59,0.25}

\def\fnrc{\frac{1}{\alpha}}

\def\foal{(1/\al)}

\def\Bb{\bar{B}}
\def\Cb{\bar{C}}
\def\Zb{\bar{Z}}

\def\HOMc{H_{\OM^c}}
\def\PT{\P_\T}
\def\PTP{\P_{\T^\perp}}
\def\Up{U_\perp}
\def\Upt{U_\perp^\top}

\def\W{{\cal W}}

\newtheorem{corollary}{Corollary}{\bf}{\it}
\newtheorem{definition}{Definition}{\bf}{\it}
{\bf}{\rm}
\newtheorem{lemma}{Lemma}{\bf}{\it}
\newtheorem{theorem}{Theorem}{\bf}{\it}
{\bf}{\it}
{\bf}{\it}
{\bf}{\rm}

\begin{document}

\title{
Deterministic Completion of Rectangular Matrices \\
Using Asymmetric Ramanujan Graphs:\\
Exact and Stable Recovery
}

\author{Shantanu Prasad Burnwal and Mathukumalli Vidyasagar
\thanks{The authors are with the Indian Institute of Technology Hyderabad,
Kandi, Telangana 502285, India.
Emails: ee16resch11019@iith.ac.in, m.vidyasagar@iith.ac.in.
This research was supported by the Department of Science and
Technology, and the Science and Engineering Research Board, Government of India.
}
}

\maketitle

\begin{abstract}

In this paper we study the matrix completion problem:
Suppose $X \in \Rno$ is unknown except for a known  upper bound $r$ on its rank.
By measuring a small number $m \ll n_r n_c$ of elements of $X$, is it
possible to recover $X$ exactly with noise-free measurements,
or to construct a good approximation of $X$ with noisy measurements?
Existing solutions to these problems involve sampling the elements uniformly
and at random, and can guarantee exact 
recovery of the unknown matrix only with high probability.
In this paper, we present a \textit{deterministic} sampling method
for matrix completion.
We achieve this by choosing the sampling set
as the edge set of an asymmetric Ramanujan bigraph,
and constrained nuclear norm minimization is the recovery method.
Specifically, we derive sufficient conditions under which the unknown
matrix is completed exactly with noise-free measurements, and is approximately
completed with noisy measurements, which we call ``stable'' completion.
This is in contrast to ``robust'' completion where it is assumed that
measurement errors occur only in a few locations.
We show that \textit{the same}
conditions that suffice for exact completion under noise-free measurements
also suffice to permit an accurate, though not exact, reconstruction
with noisy measurements.

The conditions derived here are only sufficient and more restrictive
than random sampling.
To study how close they are to being necessary, we conducted numerical
simulations on randomly generated low rank matrices, using the LPS families
of Ramanujan graphs.
These simulations demonstrate two facts:
(i) In order to achieve exact completion, 
it appears sufficient
to choose the degree
$d$ of the Ramanujan graph to be $\geq 3r$.
(ii) There is a ``phase transition,'' whereby
the likelihood of success suddenly drops from 100\% to 0\% if the rank
is increased by just one or two beyond a critical value.
The phase transition phenomenon is well-known and well-studied 
in vector recovery using $\ell_1$-norm minimization.
However, it is less studied in
matrix completion and nuclear norm minimization, and not much understood.

\end{abstract}

\section{Introduction}\label{sec:Intro}

\subsection{General Statement}\label{ssec:11}

Compressed sensing refers to the recovery of high-dimensional but
low-complexity objects from a small number of linear measurements.
Recovery of sparse (or nearly sparse) vectors, and recovery
of high-dimensional but low-rank matrices are the two most popular applications of compressed sensing.
The object of study in the present paper is the matrix completion problem,
which is a special case of low-rank matrix recovery.
The matrix completion problem has been receiving attention due to its
application to different areas such as rating systems or recommendation engines
(also known as the ``Netflix problem'' \cite{Candes-Recht08},
sensor localization, structure from motion \cite{Tomasi-Kanade-IJCV92},
and quantum tomography
\cite{Shabani-et-al11,Rodionov-et-al14,KalKosDeu15,Gross-et-al-Phys-Rev10,Li-NIPS11,FGLE12,Xia-EJS17}.
In recommendation engines, the rows represent the products, ranging
from the hundreds to the thousands, while the columns represent customers,
ranging into the millions.
In the community working on recommendation engines,
it is believed, rightly or wrongly,
that each customer uses no more than ten to fifteen features
to make a selection; consequently, the matrix with all entries of
customer preferences has rank no more than fifteen.
In sensor localization, the idea is to infer the pairwise 3-D distance between
$l$ different points, from measuring only a few pairwise distances.
It can be shown that, no matter how large the number $l$ of points is,
this matrix has rank no more than five.
In quantum tomography, the dimensions can be huge, $2^{2^n}$ where $n$
is the number of qubits, but the matrix often has rank one or two.
An excellent survey of the matrix completion problem can be found in
\cite{Davenport-Romberg16}.

\subsection{Problem Definition}\label{ssec:12}

The matrix completion problem can be stated formally as follows:
Suppose $X \in \Rnrc$ is an unknown matrix that 
we wish to recover whose rank is bounded by a known integer $r$.
Let $[n]$ denote the set $\{ 1 , \ldots , n \}$ for each integer $n$.
In the matrix completion problem,
a set $\OM \seq [n_r] \times [n_c]$ is specified, known
as the \textbf{sample set} or \textbf{measurement set}.
To be specific, suppose $\OM = \{ (i_1 , j_1) , \ldots , (i_m,j_m) \}$,
where $|\OM| = m$ is the total number of samples.
We are able to observe the values of the unknown matrix $X$ at the
locations in the set $\OM$, either with or without noise.
In the noise-free case,
the measurements consist of $X_{i,j}$ for all $(i,j) \in \OM$.
Equivalently,
the measured matrix $S$ can be expressed as the Hadamard product\footnote{Recall
that the Hadamard product $C$ of two
matrices $A,B$ of equal dimensions is defined by $c_{ij} = a_{ij} b_{ij}$
for all $i,j$.}
$E_\OM \circ X$ where $E_\OM \in \bi^{n_r \times n_c}$ is defined by
\bd
(E_\OM)_{ij} = \left\{ \ba{ll} 1 & \mbox{if } (i,j) \in \OM , \\
0 & \mbox{if } (i,j) \not\in \OM . \ea \right.
\ed
Note that if $X_{ij}$ happens to equal zero at some sampled
location $(i,j) \in \OM$, then the measured matrix $S := E_\OM \circ X$ will 
also equal zero at the same location.
However, at the unsampled locations, $S = E_\OM \circ X$ will always 
equal zero.
In the \textbf{exact matrix completion} problem, the objective is to
recover the unknown matrix $X$ exactly from $E_\OM \circ X$.
In the case of noisy measurements, the data available to the learner consists
of $E_\OM \circ X + \W$, where $\W$ is the measurement noise.
In this case it is possible to make a distinction between two cases.
In the \textbf{robust matrix completion problem},
it is assumed that the noise matrix $\W$ is \textit{sparse},
that is, $\W_{ij} = 0$ for all but a small number of pairs $(i,j)$.
The objective is still to recover $X$ exactly.
In the \textbf{stable matrix completion} problem,
there are no assumptions on the number of
nonzero elements of $\W$, but there is a known upper bound $\e$ on
the Frobenius norm of $\W$.
In this case, exact recovery of $X$ may not be possible.
Therefore, in stable matrix completion, the aim
is to construct an accurate approximation to $X$.

In the case of noise-free measurements,
one possible approach to the matrix completion problem is to set
\be\label{eq:11}
\Xh = \argmin_{Z \in \Rnrc} \rk(Z) \st E_\OM \circ Z = E_\OM \circ X .
\ee
The above problem is a special case of minimizing the rank of an unknown
matrix subject to linear constraints, and is therefore NP-hard
\cite{RFP10}.
Since the problem is NP-hard, a logical approach is to replace the rank
function by its convex relaxation, which is the \textbf{nuclear norm},
or the sum of the singular values of a matrix, as shown in
\cite{Fazel-Hindi-Boyd01}.
Therefore the convex relaxation of \eqref{eq:11} is
\be\label{eq:12}
\Xh := \argmin_{Z \in \Rnrc} \nmN{Z} \st E_\OM \circ Z = E_\OM \circ X .
\ee
In the case of sparse noisy measurements, the above formulation can
be still used (more in the section on literature review below).
In the case of unstructured noisy measurements,
let us define $X_\W = X + \W$, so that
the noise-corrupted measurement consists of the Hadamard product
$E_\OM \circ X_\W$.
In this case, as suggested in \cite{Candes-Plan-Proc10},
\eqref{eq:12} can be modified to
\be\label{eq:14}
\Xh := \argmin_{Z \in \Rnrc} \nmN{Z} \st \nmF{E_\OM \circ Z - E_\OM \circ X_\W} \leq \e 
\ee
where $\e$ is a known upper bound on the Frobenius norm of the
measurement error matrix $\W$.
For instance, if $\W$ represents Gaussian noise with
known characteristics, a bound on its Frobenius norm can be derived
with a specified prior probability.

\subsection{Contributions of the Present Paper}

In the literature to date, most of the papers assume that the sample set
$\OM$ is chosen at random from $[n_r] \times [n_c]$, either without
replacement as in \cite{Candes-Recht08,Chen-Yudong15}, or with replacement
\cite{Recht-JMLR11}.
The authors are aware of only a handful of papers
\cite{Heiman-et-al13,Bhoj-Jain14,Ash-Wang17,KTT15,AAW18}
in which
a deterministic procedure is suggested for choosing the sample set $\OM$.
In \cite{Heiman-et-al13,Bhoj-Jain14}, the sample set $\OM$ is chosen
as the edge set of a Ramanujan graph.
(This concept is defined below).
Other references such as \cite{KTT15} suggest that the sample set $\OM$
can be viewed as the edge set of a bipartite graph, but do not explicitly
take advantage of this.
In \cite{Bhoj-Jain14}, the authors use nuclear norm minimization
as in \eqref{eq:12}, and claim to derive some sufficient conditions.
However, as shown in the Appendix, there is an error in that paper.
In contrast, we present a correct sufficient condition for exact matrix
completion.
Moreover, our method of proof carries over seamlessly to stable matrix
completion, which is not studied in \cite{Bhoj-Jain14}.
In \cite{Heiman-et-al13}, the approach is max-norm minimization.
The results there are ``universal'' in that they do not involve the
coherence parameter of the unknown matrix $X$ (defined below), and also
apply to the case where the elements are sampled with a \textit{nonuniform}
probability.
The present authors have improved upon those bounds by a factor of two;
see \cite{Shantanu-P1-arxiv19}.
These improvements are achieved through modifying the so-called ``expander
mixing lemma'' for bipartite graphs, which is independent interest.
However, due to space limitations, the max norm minimization approach
to exact matrix completion is not discussed in this paper.

\section{Literature Review}\label{sec:Review}

There are two approaches to choosing the sample set $\OM$,
namely probabilistic and deterministic.
In the probabilistic approach the elements of $\OM$ are chosen at random
from $[n_r] \times [n_c]$, usually, though not always, with a uniform
distribution.
In this setting one can further distinguish between two distinct situations,
namely sampling from $[n_r] \times [n_c]$ \textit{with} replacement or
\textit{without} replacement.
In the deterministic setting, the sample set $\OM$ is interpreted
as the edge set of a bipartite graph.

\subsection{Probabilistic Sampling}\label{ssec:21}

We begin by reviewing the results with probabilistic sampling.
In \cite{Candes-Recht08}, the authors point out that the formulations
\eqref{eq:11} or \eqref{eq:12} do not always recover an unknown matrix.
They illustrate this by taking $X$ as the matrix with a $1$ in the $(1,1)$
position and zeros elsewhere.
In this case, unless $(1,1) \in \OM$, the solution to both \eqref{eq:11} 
and \eqref{eq:12} is the zero matrix, which does not equal $X$.
The difficulty in this case is that the matrix has high ``coherence,'' as
defined next.

\begin{definition}\label{def:coh}
Suppose $X \in \Rnrc$ has rank $r$ and the reduced singular value
decomposition $X = U \Gamma V^\top$, where $U \in \R^{n_r \times r}$,
$V \in \R^{n_c \times r}$, and $\Gamma \in \R^{r \times r}$ is the diagonal
matrix of the nonzero singular values of $X$.
Let $P_U = U U^\top \in \R^{n_r \times n_r}$ denote the orthogonal
projection of $\R^{n_r}$ onto $U \R^{n_r}$.
Finally, let $\eb_i \in \R^{n_r}$ denote the $i$-th canonical basis vector.
Then we define
\be\label{eq:21}
\mu_0(U) := \frac{n_r}{r} \max_{i \in [n_r]} \nmeusq{P_U \eb_i} 
= \frac{n_r}{r} \max_{i \in [n_r]} \nmeusq{u^i} ,
\ee
where $u^i$ is the $i$-th row of $U$.
The quantity $\mu_0(V)$ is defined analogously, and
\be\label{eq:22}
\mu_0(X) := \max \{ \mu_0(U) , \mu_0(V) \} .
\ee
Next, define
\be\label{eq:22a}
\mu_1(X) := \sqrt{ \frac{n_r n_c} {r}} \nmm{U V^\top}_\infty ,
\ee
\end{definition}

It is shown in \cite{Candes-Recht08} that $1\leq \mu_0(U) \leq \frac{n_r}{r}$.
The upper bound is achieved if any canonical basis vector is a column of $U$.
(This is what happens with the matrix with all but one element equalling zero.)
The lower bound is achieved if every element of $U$ has the same magnitude
of $1/\sqrt{n}$, that is, a submatrix of a Walsh-Hadamard matrix.

If one were to sample $m$ out of the $n_r n_c$ elements of the unknown matrix
$X$ \textit{without} replacement, then one is guaranteed that
exactly $m$ distinct elements of $X$ are measured.
This is the approach adopted in \cite{Candes-Recht08}.
However, the disadvantage is that the locations of the
$m$ samples are not independent,
because
once the first element has been selected, there are only $n_r n_c - 1$
choices for the second sample, and so on.
Thus sampling without replacement requires quite advanced probabilistic
analysis.

\begin{theorem}\label{thm:21}
(See \cite[Theorem  1.1]{Candes-Recht08}.)
Suppose there is a known constant $\mu$ such that $\mu \geq
\max \{ \mu_0(X) , \mu_1(X) \}$.
Draw
\be\label{eq:38b}
m \geq Cn^{5/4}r\log(n) 
\ee
samples from $[n_r]\times [n_c]$ without replacement, with a uniform
distribution.
Then with probability atleast $1-\zeta$ where
\be\label{eq:40b}
\zeta = cn^{-3}
\ee
the recovered matrix $\Xh$ using \eqref{eq:12} is the unique solution.
Here $C,c$ are some universal constants that depend on $\mu$,
and $n=\max(n_r,n_c)$.
\end{theorem}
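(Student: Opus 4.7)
The plan is to establish exact recovery through a convex-duality (dual-certificate) argument. Let $X = U \Gamma V^\top$ be the reduced SVD of $X$ and define the tangent space
\[
\T := \{ U A^\top + B V^\top : A \in \R^{n_c \times r} , \; B \in \R^{n_r \times r} \} ,
\]
with orthogonal projector $\PT$ (and $\PTP = I - \PT$). Let $\P_\OM(Z) := E_\OM \circ Z$ denote the sampling operator. Standard subdifferential calculus for the nuclear norm shows that $X$ is the unique optimum of \eqref{eq:12} provided (i) $\P_\OM$ restricted to $\T$ is injective, and (ii) there exists a matrix $Y$ supported on $\OM$ satisfying $\PT(Y) = U V^\top$ and $\nmM{\PTP(Y)} < 1$, where $\nmM{\cdot}$ denotes the spectral norm. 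The problem thus reduces to verifying (i) and constructing such a $Y$.

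For (i), I would carry out a concentration analysis of the random operator $\PT \P_\OM \PT$. Writing $\P_\OM$ as a sum of random rank-one operators indexed by the samples, and passing from without-replacement to with-replacement sampling by an exchangeability / coupling argument, a matrix Bernstein (or noncommutative Khintchine) bound yields
\[
\Bigl\| \PT \P_\OM \PT - \tfrac{m}{n_r n_c} \PT \Bigr\| \; \leq \; \tfrac{1}{2} \cdot \tfrac{m}{n_r n_c}
\]
with high probability, once $m$ exceeds a constant multiple of $\mu r n \log n$. Coherence enters precisely here: $\mu_0(X) \leq \mu$ gives a uniform bound $\nmeusq{\PT(\eb_i \eb_j^\top)} \lesssim \mu r / n$, which controls the variance parameter of the rank-one summands and yields both injectivity of $\P_\OM|_\T$ and a two-sided Frobenius estimate.

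For (ii), I would build $Y$ by the Candes--Recht iterative scheme. Set $W_0 := U V^\top$ and, for $k \geq 0$,
\[
Y_{k+1} := Y_k + \tfrac{n_r n_c}{m} \P_\OM(W_k) , \qquad W_{k+1} := W_k - \PT\bigl(\tfrac{n_r n_c}{m} \P_\OM(W_k)\bigr) .
\]
The concentration result of the previous step contracts $\nmF{W_k}$ geometrically, so $\PT(Y_k) \to U V^\top$; combining this contraction with the bound $\nmm{U V^\top}_\infty \leq \mu_1 \sqrt{r / (n_r n_c)}$ and one further Bernstein estimate applied to the off-tangent part forces $\nmM{\PTP(Y)} < 1$ after finitely many iterations, at which point the construction is truncated and the final certificate is produced.

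The main obstacle is obtaining the exponent $5/4$ in $m \geq C n^{5/4} r \log n$ (rather than a crude bound like $n^2$): it is not produced by step (i) alone, but requires the simultaneous use of the $\mu_0$ bound (for operator-norm control of individual rank-one summands) and the $\mu_1$ bound (for $\ell_\infty$-control of $U V^\top$ and of each iterate $W_k$) in the certificate construction. A careless union bound across iterations degrades the exponent; the delicate part is to track how $\nmm{W_k}_\infty$ shrinks along the iteration so that the per-iteration failure probabilities telescope into the claimed tail $cn^{-3}$. Once these concentration and certificate estimates are assembled, the optimality conditions (i)--(ii) hold with the stated probability, and uniqueness of $X$ as the solution to \eqref{eq:12} follows.
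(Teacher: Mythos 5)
First, a point of comparison: the paper offers no proof of this statement. Theorem \ref{thm:21} is quoted from Cand\`es and Recht as part of the literature review, so there is no internal argument to measure yours against; your proposal has to be judged against the proof in the cited reference.

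Your overall framework --- injectivity of the sampling operator restricted to $\T$, plus a dual certificate $Y$ supported on $\OM$ with $\PT(Y) = UV^\top$ and spectral norm of $\PTP(Y)$ below $1$ --- is the correct one, and it matches both Cand\`es--Recht and the deterministic analogue developed later in this paper (Lemma \ref{lemma:31}). The genuine gap is in your step (ii). The iteration you write down reuses the \emph{same} sampling operator $\P_\OM$ at every stage, so each $W_k$ is a deterministic function of $\OM$; you therefore cannot apply a matrix Bernstein bound to $\P_\OM(W_k)$ as though $W_k$ were fixed, and the ``per-iteration failure probabilities'' you hope to telescope are not well defined without fresh randomness at each step. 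The scheme you describe is the golfing scheme, which only works when the $m$ samples are partitioned into independent batches, one per iteration --- that is how Recht's later proof (Theorem \ref{thm:22} here) proceeds, under sampling \emph{with} replacement, and it yields a bound of order $r n \log^2 n$, not $n^{5/4} r \log n$. Cand\`es and Recht do something quite different: they take the exact least-squares certificate $Y = \P_\OM \PT (\PT \P_\OM \PT)^{-1}(UV^\top)$, expand the inverse as a Neumann series, and control the spectral norm of each term by delicate combinatorial moment estimates; the exponent $5/4$ (and $6/5$ for small $r$) is precisely the price of only being able to control the first few terms of that series sharply. Your sketch contains no mechanism that would produce this exponent: as written the argument either fails on the dependency issue, or, once repaired via sample splitting, proves a different theorem.
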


An alternative is to sample the elements of $X$ \textit{with} replacement.
In this case the locations of the $m$ samples are indeed independent.
However, the price to be paid is that, with some small probability,
there would be duplicate samples, so that after $m$ random draws,
the number of elements of $X$ that are measured could be smaller than $m$.
This is the approach adopted in \cite{Recht-JMLR11}.
On balance, the approach of sampling with replacement is easier to analyze.

\begin{theorem}\label{thm:22}
(See \cite[Theorem  2]{Recht-JMLR11}.)
Assume without loss of generality that $n_r \leq n_c$.
Choose some constant $\beta > 1$, and draw
\be\label{eq:36a}
m \geq 32 \max \{ \mu_1^2,\mu_0\} r(n_r+n_c) \beta \log^2(2 n_c) 
\ee
samples from $[n_r] \times [n_c]$ with replacement.
Define $\Xh$ as in \eqref{eq:12}.
Then, with probability at least equal to $1 - \zeta$ where
\be\label{eq:37a}
\zeta = 6 \log(n_c)(n_r+n_c)^{2-2\beta} + n_c^{2 - 2 \sqrt{\beta} } ,
\ee
the true matrix $X$ is the unique solution to
the optimization problem, so that $\Xh = X$.
\end{theorem}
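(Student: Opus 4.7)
The plan is to follow the standard dual-certificate strategy that has become canonical for nuclear-norm matrix completion. Let $X = U \Gamma V^\top$ be the reduced SVD, let $T$ denote the tangent space to the manifold of rank-$r$ matrices at $X$, namely
\[
T = \{ U A^\top + B V^\top : A \in \R^{n_c \times r}, \; B \in \R^{n_r \times r} \},
\]
and let $\PT$, $\PTP$ be the orthogonal projections onto $T$ and $T^\perp$. A standard convex-analysis argument shows that $X$ is the unique optimum of \eqref{eq:12} provided (a) the sampling operator $\R_\OM$ is injective on $T$, and (b) there exists a dual certificate $Y$ supported on $\OM$ with $\PT(Y) = U V^\top$ and $\nmeu{\PTP(Y)} < 1$. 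The entire proof reduces to constructing such a $Y$.

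First, I would introduce the rescaled sampling operator $\R_\OM$ that sums the entries over $\OM$ weighted so that $\Ebb[\R_\OM] = n_r n_c \cdot I$ under uniform sampling with replacement. The central technical tool is to bound $\nmeu{ \PT - \frac{n_r n_c}{m} \PT \R_\OM \PT }$ in operator norm. This is done via the noncommutative (matrix) Bernstein inequality applied to the sum of $m$ i.i.d.\ random rank-one operators. The coherence hypothesis $\mu_0 \leq \mu$ is exactly what is needed to control the per-summand operator norm and variance, and a short calculation shows that if $m$ exceeds a constant multiple of $\mu_0 r (n_r+n_c) \log(n_c)$, then with high probability this operator norm is at most $1/2$. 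This gives (a) and, more importantly, means that $\frac{n_r n_c}{m} \PT \R_\OM \PT$ is invertible on $T$ with a well-conditioned inverse.

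Next, I would construct the dual certificate by Gross's \emph{golfing scheme}. Partition the $m$ samples into $k = \Theta(\log n_c)$ independent batches $\OM_1, \ldots, \OM_k$, each of size roughly $m/k$. Define iteratively $W_0 = U V^\top$ and, for $\ell \geq 1$,
\[
Y_\ell = Y_{\ell-1} + \frac{n_r n_c}{|\OM_\ell|} \R_{\OM_\ell}(W_{\ell-1}), \qquad W_\ell = U V^\top - \PT(Y_\ell),
\]
setting $Y = Y_k$. By construction $Y$ is supported on $\OM$. The batch independence allows each step to contract $\nmF{W_\ell}$ by a factor of $1/2$ (using the near-isometry bound above applied to a \emph{fixed} matrix $W_{\ell-1}$ in $T$), so $\nmF{W_k} \leq 2^{-k} \nmF{UV^\top}$ becomes exponentially small, yielding $\PT(Y) \approx UV^\top$; a small perturbation then gives the exact equality $\PT(Y) = UV^\top$.

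The main obstacle, and the step that consumes most of the work, is bounding $\nmeu{\PTP(Y)} < 1$. Writing $\PTP(Y) = \sum_\ell \PTP\bigl( \frac{n_r n_c}{|\OM_\ell|} \R_{\OM_\ell}(W_{\ell-1}) - W_{\ell-1} \bigr)$, each summand is a sum of i.i.d.\ centered random matrices conditioned on the previous batch, so the matrix Bernstein inequality applies again. This is where the hypothesis on $\mu_1$ enters: it controls $\nmm{UV^\top}_\infty$ and, propagated through the iteration, controls $\nmm{W_\ell}_\infty$, which in turn bounds the per-summand operator norm appearing in Bernstein. Balancing the $k$ tail bounds via a union bound and choosing the batch size according to \eqref{eq:36a} produces the stated failure probability \eqref{eq:37a}. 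The two terms in \eqref{eq:37a} correspond respectively to the golfing tail bounds (the $\log(n_c) (n_r+n_c)^{2-2\beta}$ piece) and to the initial near-isometry estimate on $T$ (the $n_c^{2-2\sqrt{\beta}}$ piece).
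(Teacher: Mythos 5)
You should first note that the paper itself offers no proof of this statement: Theorem \ref{thm:22} is quoted verbatim from the cited reference \cite{Recht-JMLR11} as part of the literature review, so there is no in-paper argument to compare against. Your sketch is a faithful outline of the proof that actually appears in that reference: the reduction to a dual certificate supported on $\OM$, the near-isometry of $\PT \Rcal_\OM \PT$ on $\T$ via the noncommutative Bernstein inequality (which uses $\mu_0$), the golfing scheme over $\Theta(\log n_c)$ independent batches to build $Y$, and the use of $\mu_1$ to control $\nmm{UV^\top}_\infty$ and hence the per-summand norms when bounding $\nmS{\PTP(Y)}$. The one step I would push back on is your claim that ``a small perturbation then gives the exact equality $\PT(Y) = UV^\top$.'' You cannot in general correct $Y$ to achieve exact equality on $\T$ while keeping it supported on $\OM$; the whole point of the Gross--Recht relaxation is that exact equality is \emph{not} needed. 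The optimality lemma is instead proved under the weaker pair of conditions $\nmF{\PT(Y) - UV^\top} \leq \sqrt{r/(2 n_c)}$ (or a similar explicit tolerance) together with $\nmS{\PTP(Y)} \leq 1/2$, and the exponential decay $\nmF{W_k} \leq 2^{-k}\nmF{UV^\top}$ from the golfing iteration is used precisely to meet that tolerance. With that correction your outline matches the known proof; it is also worth observing that the present paper's own Lemma \ref{lemma:31} is the deterministic analogue of exactly this approximate-dual-certificate lemma, with $k_1, k_2, k_3$ playing the roles of the three bounds above.
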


A recent result published in \cite{Chen-Yudong15} gives an improved
sufficient condition that involves only the coherence parameter $\mu_0(X)$,
but not $\mu_1(X)$, as in Theorems \ref{thm:22}.
This approach reduces the number of measurements $m$
by a factor of $r$, but adds a factor of $\log n$.
Therefore, whenever $r > \log n$, the approach of
\cite{Chen-Yudong15} is an improvement.

\begin{theorem}\label{thm:21a}
(See \cite[Theorem 1]{Chen-Yudong15})
Draw 
\be\label{eq:38a}
m \geq C\mu_0rn\log^2(n_r+n_c)
\ee
samples from $[n_r]\times [n_c]$ uniformly at random, with replacement.
Then with probability at least $1-\zeta$ where
\be\label{eq:40a}
\zeta = c_1(n_r+n_c)^{-c_2}
\ee
the recovered matrix $\Xh$ using \eqref{eq:12} is the unique solution.
Here $n=\max(n_r,n_c)$ and $C,c_1,c_2$ are some constants greater than zero.
\end{theorem}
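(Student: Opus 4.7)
The plan is to follow the dual-certificate strategy combined with the \emph{golfing scheme} of Gross, which is what permits the sample complexity in \eqref{eq:38a} to depend only on $\mu_0$ (rather than on both $\mu_0$ and $\mu_1$ as in Theorem \ref{thm:22}). Write $X = U\Gamma V^\top$ in reduced SVD form, let $T = \{UM^\top + NV^\top : M \in \R^{n_c \times r}, N \in \R^{n_r \times r}\}$ be the tangent space at $X$ to the variety of rank-$r$ matrices, and let $\PT$ denote the orthogonal projection onto $T$. For $\OM$ drawn uniformly with replacement, let $\Rcal_\OM$ denote the sampling operator rescaled so that its expectation equals the identity on $\Rnrc$. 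The standard sufficient condition for $X$ to be the unique minimizer of \eqref{eq:12} is then (i) that $\Rcal_\OM$ be injective on $T$, and (ii) that there exist a dual certificate $Y$ in the range of $\Rcal_\OM$ with $\nmF{\PT(Y) - UV^\top}$ small and $\nmm{\PTP(Y)} < 1/2$ in operator norm.

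First I would establish two Bernstein-type concentration estimates for sums of independent rank-one random operators arising from sampling with replacement. The first asserts that $\nmm{\PT \Rcal_\OM \PT - \PT} \leq 1/2$ once $m \gtrsim \mu_0 r n \log n$, which takes care of (i) and also drives the contraction step inside the golfing iterations. The second, and the key ingredient that enables the $\mu_0$-only bound, is an entrywise-to-operator-norm estimate of the form $\nmm{(\Rcal_\OM - I)(Z)} \leq C \sqrt{n \log n / p}\cdot \nmm{Z}_\infty$ for any fixed $Z \in T$, where $p = m/(n_r n_c)$. Both estimates would follow from the noncommutative Bernstein inequality for sums of independent random matrices, with the uniform bounds on individual summands supplied by the incoherence assumption $\mu_0(X) \leq \mu_0$.

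Next I would carry out the golfing construction. Partition the $m$ draws into $\ell = \Theta(\log(n_r + n_c))$ independent batches $\OM_1, \ldots, \OM_\ell$, each of size $\Theta(m/\ell)$, and define $W_0 = UV^\top$, $W_k = (\PT - \PT \Rcal_{\OM_k} \PT)(W_{k-1})$, and $Y = \sum_{k=1}^\ell \Rcal_{\OM_k}(W_{k-1})$. The first estimate above yields $\nmF{W_k} \leq 2^{-k} \nmF{UV^\top}$, so that $\PT(Y) = UV^\top - W_\ell$ is exponentially close to $UV^\top$ in Frobenius norm. For the bound on $\nmm{\PTP(Y)}$, I would apply the second estimate to each term $\Rcal_{\OM_k}(W_{k-1})$; the sum of the resulting bounds is controlled by $\sum_k \nmm{W_{k-1}}_\infty$, which in turn requires the auxiliary (and crucial) fact that the entrywise norms $\nmm{W_k}_\infty$ themselves contract geometrically. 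This last contraction is proved by yet another matrix Bernstein application, this time bounding $\nmm{(\PT - \PT \Rcal_{\OM_k} \PT)(W_{k-1})}_\infty$ in terms of $\nmm{W_{k-1}}_\infty$, followed by a union bound over the $n_r n_c$ entries.

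The main obstacle will be the joint control of the three norms $\nmF{\cdot}$, $\nmm{\cdot}$, and $\nmm{\cdot}_\infty$ on the iterates $W_k$: each batch must simultaneously contract all three, and the contraction of the entrywise norm is the delicate step that couples the incoherence parameter $\mu_0$ into the final bound. Tracking these norms through the iterations, and taking a union bound over all $\ell = \Theta(\log(n_r + n_c))$ batches so that the Bernstein estimates hold simultaneously on every iterate, is what produces the logarithmic factors in \eqref{eq:38a} and the polynomial failure probability in \eqref{eq:40a}, with the constants $c_1, c_2$ in \eqref{eq:40a} determined by the Bernstein tail parameters and the chosen contraction rate.
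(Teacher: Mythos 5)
First, a point of reference: the paper itself offers no proof of this statement. Theorem \ref{thm:21a} is quoted from \cite{Chen-Yudong15} as part of the literature review, so your attempt can only be compared against the argument in that reference.

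Your outline invokes the right family of techniques (dual certificate plus the golfing scheme), but as written it has a concrete gap that prevents it from reaching the claimed sample complexity \eqref{eq:38a}. The problem lies in your second concentration estimate, $\nmm{(\Rcal_\OM - I)(Z)} \lesssim \sqrt{n\log n/p}\,\nmm{Z}_\infty$, together with your decision to propagate only the entrywise norm $\nmm{W_k}_\infty$ through the golfing iterations. Applied to the initial iterate $W_0 = UV^\top$, this forces a bound on $\nmm{UV^\top}_\infty$, and that quantity is exactly what the joint incoherence parameter $\mu_1$ of \eqref{eq:22a} measures: $\nmm{UV^\top}_\infty = \mu_1\sqrt{r/(n_r n_c)}$. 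Carrying this through the iterations reproduces Recht's argument and yields the $\max\{\mu_1^2,\mu_0\}$-dependent bound of Theorem \ref{thm:22}, not the $\mu_0$-only bound claimed here. If you instead eliminate $\mu_1$ via the Cauchy--Schwarz estimate $\nmm{UV^\top}_\infty \leq \mu_0 r/\sqrt{n_r n_c}$ (equivalently $\mu_1 \leq \mu_0\sqrt{r}$), the sample complexity inflates by an extra factor of $r$, giving $m \gtrsim \mu_0 r^2 n \log^2 n$ rather than \eqref{eq:38a}.

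The ingredient you are missing, which is the actual novelty of \cite{Chen-Yudong15}, is a sharper concentration inequality phrased in terms of the $\ell_{\infty,2}$ norm $\nmm{Z}_{\infty,2} := \max\{\max_i \nmeu{Z^i},\, \max_j \nmeu{Z_j}\}$: with $p := m/(n_r n_c)$, one has roughly $\nmm{(\Rcal_\OM - I)(Z)} \lesssim ((\log n)/p)\,\nmm{Z}_\infty + \sqrt{(\log n)/p}\;\nmm{Z}_{\infty,2}$, so the entrywise norm enters only in a lower-order term where the crude bound $\nmm{UV^\top}_\infty \leq \mu_0 r/\sqrt{n_r n_c}$ is harmless, while the leading term is controlled by $\nmm{UV^\top}_{\infty,2} \leq \sqrt{\mu_0 r/\min(n_r,n_c)}$, which involves only $\mu_0$. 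The golfing iteration must then track the joint contraction of three norms ($\nmF{\cdot}$, $\nmm{\cdot}_{\infty,2}$, $\nmm{\cdot}_\infty$) rather than the two you list. Without this device your plan proves a true but strictly weaker theorem.
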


Theorems \ref{thm:21}-\ref{thm:22} gives conditions of exact recovery using
the probabilistic method when there are no measurement errors involved.
For the case of unstructured noisy measurements (that is,
stable matrix completion), the following result is relevant.

\begin{theorem}\label{thm:21n}
(See \cite[Theorem 7]{Candes-Plan-Proc10}.)
Draw samples uniformly from $[n_r]\times [n_c]$ without replacement.
Then with probability atleast $1-\zeta$ where
\be\label{eq:231}
\zeta = cn^{-3}
\ee
the recovered matrix $\Xh$ using \eqref{eq:14} satisfies
\be\label{eq:232}
\nmF{\Xh-X} \leq 
\left[2\sqrt{\frac{C \min(n) }{p}}+1\right]2\e. 
\ee
Here $c,C$ are constants dependent on $\mu$.
\end{theorem}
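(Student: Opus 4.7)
The plan is to analyze the error $H := \Xh - X$ through the tangent space decomposition at $X$. Let $T$ denote the tangent space at $X$ to the manifold of matrices of rank at most $r$, so that $T = \{UA^\top + BV^\top : A \in \R^{n_c \times r}, B \in \R^{n_r \times r}\}$, where $U\Gamma V^\top$ is the reduced SVD of $X$. The feasibility constraint in \eqref{eq:14}, combined with the noise bound $\nmF{\W} \leq \e$, immediately gives $\nmF{E_\OM \circ H} \leq 2\e$. The goal is to translate this restricted Frobenius bound into a full Frobenius bound on $H$.

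I would proceed by recycling, under the stated sampling regime, the two probabilistic ingredients that underlie the exact-recovery Theorem \ref{thm:21}. First, with probability at least $1-cn^{-3}$, there exists a dual certificate $Y$ supported on $\OM$ with $\PT(Y)$ close to $UV^\top$ in Frobenius norm and with $\PTP(Y)$ of spectral norm strictly below $1$; this is obtained either by the Candes--Recht least-squares construction or Gross's golfing scheme. Second, and obtained along the way, is the tangent-space near-isometry: restricted to $T$, the operator $M \mapsto \PT(E_\OM \circ M)$ agrees with $p\,\PT$ up to operator-norm error at most $p/2$. The latter implies in particular that $\nmF{\PT H}^2 \leq (2/p)\,\nmF{E_\OM \circ \PT H}^2$.

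With these ingredients, the argument splits naturally. On the one hand, the near-isometry together with the triangle inequality and the fact that Hadamard multiplication by a $0/1$ matrix is norm-contractive yields
\begin{align*}
\nmF{\PT H} & \leq \sqrt{2/p}\;\nmF{E_\OM \circ \PT H} \\
& \leq \sqrt{2/p}\,\bigl(\nmF{E_\OM \circ H} + \nmF{E_\OM \circ \PTP H}\bigr) \\
& \leq \sqrt{2/p}\,\bigl(2\e + \nmF{\PTP H}\bigr) .
\end{align*}
On the other hand, the optimality $\nmN{\Xh} \leq \nmN{X_\W}$, expanded against a subgradient $UV^\top + W_0$ of the nuclear norm at $X$ (with $W_0 \in T^\perp$, $\nmeu{W_0} \leq 1$ chosen so that $\IP{W_0}{\PTP H} = \nmN{\PTP H}$), together with the identity $\IP{Y}{H} = \IP{Y}{E_\OM \circ H}$ arising from $P_\OM Y = Y$, produces an inequality of the form $\nmN{\PTP H} \leq C_1 \nmF{\PT H} + C_2 \e$. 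Since $\nmF{\cdot} \leq \nmN{\cdot}$, the same bound holds for $\nmF{\PTP H}$. Substituting into the chain above and solving the resulting linear inequality for $\nmF{\PT H}$ delivers $\nmF{\PT H} = O(\sqrt{\min(n)/p}\,\e)$, after which $\nmF{H} \leq \nmF{\PT H} + \nmF{\PTP H}$ yields the advertised estimate.

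The principal obstacle will be producing the tangent-space near-isometry and the dual certificate under sampling without replacement with the explicit constants that assemble into exactly the stated prefactor $2\sqrt{C\min(n)/p} + 1$. The standard Bernoulli-sampling proofs (matrix Bernstein for the near-isometry, golfing scheme for the certificate) transfer to without-replacement by a stochastic-dominance comparison, but tracking the universal constants carefully through each stage, so that they combine into precisely the claimed factor rather than a cosmetic variant, is the delicate bookkeeping part of the argument.
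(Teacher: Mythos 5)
This statement is quoted from the literature --- the paper gives no proof of it, only the citation to \cite[Theorem 7]{Candes-Plan-Proc10} --- so there is no in-paper proof to compare against. Your skeleton is, however, exactly the Cand\`es--Plan argument, and it is worth noting that the paper's own Lemma \ref{lemma:31} (used to prove the deterministic Theorem \ref{thm:35}) is the deterministic transcription of this very skeleton: the feasibility bound $\nmF{E_\OM \circ H} \leq 2\e$, the decomposition of the error via $\PT$ and $\PTP$, the near-isometry of $Z \mapsto \PT(E_\OM \circ Z)$ on $\T$, and the dual certificate supported on $\OM$ all reappear there with the probabilistic hypotheses replaced by the Ramanujan/coherence conditions. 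So your route is the right one.

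Two gaps keep the proposal from being a proof rather than a plan. First, the two probabilistic ingredients --- the dual certificate with $\nmS{\PTP(Y)} < 1$ and $\nmF{UV^\top - \PT(Y)}$ small, and the restricted near-isometry with constant $p/2$ --- are asserted, not established; they are the entire content of the Cand\`es--Recht/Gross machinery, and the without-replacement transfer is waved at rather than carried out. Second, the source of the $\sqrt{\min(n)}$ factor in \eqref{eq:232} is not made explicit in your chain of inequalities: it enters when one converts $\nmN{H_\OM}$ (which the optimality-plus-certificate step controls) into $\nmF{H_\OM} \leq 2\e$ via $\nmN{A} \leq \sqrt{\rk(A)}\,\nmF{A}$ with $\rk(H_\OM) \leq \min(n_r,n_c)$ --- precisely the step marked $(b)$ in \eqref{eq:41} of the paper's Lemma \ref{lemma:31}. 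Your inequality $\nmN{\PTP H} \leq C_1 \nmF{\PT H} + C_2\e$ silently buries this factor in $C_2$; without surfacing it, the advertised prefactor $2\sqrt{C\min(n)/p}+1$ cannot be recovered.
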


\subsection{Robust Matrix Completion}\label{ssec:22}

In this subsection we present a \textit{very brief} review of
robust matrix completion.
The interested reader is directed to the references of the papers discussed
here for more information.

In \cite{P-ABN16}, the authors analyze the case of noise-free measurements.
They begin by observing that $X$ has rank $r$, then the columns of $X$
span an $r$-dimensional subspace, call it $\Sm$.
Each measurement leads to some constraints on $\Sm$.
By analyzing the structure of the measurements, it may be possible to
determine the subspace $\Sm$ uniquely.
The following result gives the flavor of the results in
\cite{P-ABN16}.\footnote{For the convenience of the reader, we use the
notation in that paper as opposed to the current notation.}

\begin{theorem}\label{thm:A1}
(See \cite[Theorem 3]{P-ABN16}.
Suppose $X \in \R^{d \times N}$ and has rank $r$ or less.
Suppose $r \leq d/r$, and that each column of $X$ is observed in
at least $l$ rows, distributed uniformly and independently across columns,
where
\be\label{eq:A1}
l \geq \max \left\{ 12 \left( \log \frac{d}{\e} + 1 \right) , 2r \right\} .
\ee
Then with probability at least $1-\e$, the matrix $X$ can be exactly completed.
\end{theorem}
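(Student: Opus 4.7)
The plan is to reduce exact matrix completion to identification of the column space $\Sbb$, an $r$-dimensional subspace of $\R^d$ containing every column of $X$. Once $\Sbb$ is known, any column with $l\geq r$ observed entries in general position with respect to $\Sbb$ can be reconstructed by solving the corresponding linear system, so the whole problem collapses to recovering $\Sbb$ from the partial column data.

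Next, I would set up the identifiability criterion for $\Sbb$. Write $\OM_j \seq [d]$ with $|\OM_j|=l$ for the observation set of column $j$, and let $\Sbb_{\OM_j}$ denote the projection of $\Sbb$ onto those coordinates. Provided this projection is $r$-dimensional in $\R^l$, the constraint $x_{\OM_j} \in \Sbb_{\OM_j}$ imposes $l-r$ linear conditions on $\Sbb$ regarded as a point in the Grassmannian $G(r,d)$. Since $\dim G(r,d) = r(d-r)$, the hypothesis $l \geq 2r$ guarantees at least $r$ fresh conditions per column, and sufficiently many columns in general position will pin $\Sbb$ down uniquely. The failure mode is the existence of an alternative $r$-dimensional subspace $\Sbb' \neq \Sbb$ compatible with every observed entry, which forces the observed rows of each column to lie inside the coordinate set on which the projections of $\Sbb$ and $\Sbb'$ agree, necessarily a proper subset of $[d]$.

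Bounding the probability of this failure event is the main analytic work. For a fixed candidate $\Sbb'$, a Chernoff or hypergeometric tail bound on the event that all $l$ uniformly random rows fall inside the agreement coordinate set yields a per-column probability decaying like $\exp(-cl)$; combining across columns and taking a union bound gives a total failure probability of order $d \exp(-l/12)$, matching $l \geq 12(\log(d/\e)+1)$. The principal obstacle is that the alternative subspaces $\Sbb'$ form a continuum rather than a finite collection, so a union bound cannot be applied naively. The key reduction is that compatibility is determined only by the combinatorial row-incidence pattern induced by $\Sbb'$, collapsing the uncountable family of rivals to a polynomially small combinatorial family; the hypothesis $r \leq d/r$ enters precisely here, keeping the number of such patterns small enough that the final bound scales only as $\log d$ rather than $r \log d$.
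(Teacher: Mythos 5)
You should know at the outset that the paper contains no proof of this statement: Theorem \ref{thm:A1} appears in the literature-review section and is quoted directly from \cite[Theorem 3]{P-ABN16}, with the text explicitly deferring to \cite[Section IV-B]{P-ABN16} for the completion procedure (which solves polynomial systems rather than a convex program). So there is nothing in-paper to compare against. That said, your overall architecture --- reduce completion to identifying the column space $\Sbb$, characterize identifiability of $\Sbb$ by a combinatorial condition on the observation pattern via a degrees-of-freedom count on the Grassmannian, then verify that condition with probability $1-\e$ under uniform sampling --- is in fact the strategy of the cited source, so the skeleton is right.

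The concrete gap is in your probabilistic step. For a fixed rival subspace $\Sbb' \neq \Sbb$ whose projection agrees with that of $\Sbb$ on a coordinate set $A \seq [d]$, the probability that one column's $l$ uniformly chosen rows all land inside $A$ is $\binom{|A|}{l} / \binom{d}{l}$, and $|A|$ can be as large as $d-1$: for any fixed $d-1$ coordinates there is an $r$-parameter family of rank-$r$ subspaces whose projections agree with that of $\Sbb$ there. In that case the per-column probability is $(d-l)/d = 1 - l/d$, nowhere near the $\exp(-cl)$ you assert, so the union bound over combinatorial patterns cannot close column by column; the most dangerous rivals are exactly the ones your bound misses. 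The argument that actually works (and produces the constant $12$ and the $\log(d/\e)$ term) is joint across columns: non-unique completability forces some subset of columns whose union of observed rows is too small for the number of Grassmannian degrees of freedom it must determine --- a Hall-type counting condition on the sampling pattern --- and it is the probability of that covering failure, a coupon-collector event, that is bounded by roughly $d\,e^{-l/12}$. Relatedly, your ``sufficiently many columns in general position'' step invokes a hypothesis the statement does not supply (there is no lower bound on $N$ in the statement as transcribed, and ``$r \leq d/r$'' is a smallness condition on $r$ relative to $d$, not a bound on the number of incidence patterns), so the dimension count needs to be anchored to the stated hypotheses rather than to an unstated abundance of columns.
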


The ``completion algorithm'' is \textit{not} nuclear norm
minimization as in earlier papers.
Rather, one has to solve a set of polynomial equations; see
\cite[Section IV-B]{P-ABN16}.
Thus, while the \textit{sample complexity} of the number of measurements
grows slowly, the \textit{computational complexity} of implementing
the recovery algorithm is higher than with nuclear norm minimization.

Note that there is no measurement noise in the formulation of \cite{P-ABN16}.
Since the results in \cite{P-ABN16} depend on determining which $r$-dimensional
subspaces are compatible with the observations, in principle it is possible
for the recovery algorithm to be robust against \textit{a limited number}
of erroneous measurements.
This is the problem of \textit{robust matix completion}.
One of the recent contributions on this topic is \cite{AAW18}, which also
contains a wealth of references.
The key results are \cite[Theorem 4]{AAW18} and \cite[Theorem 5]{AAW18}.
We do not state those theorems here because of the need to introduce a great 
deal of notation and background material; instead we refer the reader to the
original source.

Another direction of research is \textit{tensor completion} instead of
matrix completion.
At a very basic level, one can think of a tensor as a real number indexed
by more than two integer indices.
However, tensor analysis is far more intricate compared to matrix analysis,
in terms of canonical forms, representation, rank factorization etc.
The notion of ``rank'' extends to tensors.
Therefore it is reasonable to pose the tensor completion problem
as that of reconstructing a tensor from measuring some of its components.
A recent paper \cite{Ash-Wang17} builds on the general approach proposed 
in \cite{P-ABN16} by examining which set of tensors is compatible
with a particular set of measurements, and then enlarging the set of
measurements until only the true but unknown tensor is compatible
with the measurements.

We conclude this brief review by mentioning \cite{KTT15}, which
introduces the notion of reconstructing \textit{a single element}
of the unknown matrix, and then building on that.
One noteworthy feature of this paper is the explicit recognition
of the sample set $\OM$ as the edge set of a bipartite graph.
However, unlike with Ramanujan graphs, the approach does not
take into account the \textit{spectral properties} of this bipartite graph.

\subsection{Alternatives to Nuclear Norm Minimization}

While nuclear norm minimization is the most popular approach to matrix 
completion, there are other approaches, of which just a few are
reviewed here.

In \cite{KMO10a}, the matrix completion problem is solved via an algorithm
called ``OptSpace,'' which incorporates three steps: Trimming, projecting, and
cleaning.
The algorithm can be viewed as optimization on the Grassmanian manifold
of subspaces.
In \cite{KMO10b}, the OptSpace algorithm is extended to encompass measurement
noise.

One approach, inspired by \cite{Burer-Mont03}, is to enforce the constraint
that $\rk(X) \leq r$ by factoring $X$ as $G H^\top$, where
$G \in \R^{n_r \times r}, H \in \R^{n_c \times r}$.
Let $S = E_\OM \circ X$ denote the measured matrix as above.
Then \eqref{eq:11} can be reformulated as
\be\label{eq:16}
\min_{G \in \R^{n_r \times r}, H \in \R^{n_c \times r}}
\nmF{ GH^\top - S}^2 + \l ( \nmF{G}^2 + \nmF{H}^2 ) ,
\ee
where $\nmF{\cdot}$ denotes the Frobenius norm of a matrix, that is,
the $\ell_2$-norm of its vector singular values, and $\l$ is an
adjustable weight.
See \cite[Eq.\ (10)]{Davenport-Romberg16} for more explanation.
The potential difficulty with \eqref{eq:16} is that it is no longer a
convex optimization problem, and thus could have spurious local minima.
Over the years, several papers, including the recent paper \cite{ZLTW-TSP18},
show that there are no spurious local minima.
Moreover, this statement is true for more general problems than \eqref{eq:16},
when the quadratic cost function is replaced by more general cost
functions.

In \cite{YFS-Ent18}, the authors first convert the \textit{constrained}
optimization problem \eqref{eq:12} to the \textit{regularized} problem
\bd
\min_{Z \in \Rno} \nmF{Z - S}^2 + \l \nmm{Z}_1 ,
\ed
where, as before $S = E_\OM \circ X$ is the measured matrix.
Then they replace the term $\nmF{Z - S}^2$ by what they call a
\textbf{correntropy} term, as follows:
\bd
L_\s(Z) := \frac{\s^2}{2} \sum_{(i,j) \in \OM}
[ 1 - \exp( - (Z_{ij} - S_{ij})^2/\s^2) ] ,
\ed
where $\s$ is an adjustable parameter.

In cases where there is more structure to the unknown matrix $X$,
other approaches are possible.
In \cite{Chen-Chi-TIT14}, the authors study the problem of robust compressed
sensing, and formulate it as a structured matrix completion problem.
The problem is then solved via a new algorithm called Enhanced Matrix
Completion.
In \cite{Chen-Chi-TIT15}, the objective is to estimate an unknown
covariance matrix using rank one measurements.
Thus $X$ is assumed to be symmetric and positive definite, which is
then probed via measurements of the form
\be\label{eq:17}
y_l = a_l^\top X a_l + \eta_l = \IPF{X}{a_l a_l^\top} + \eta_l , l \in [m] ,
\ee
where $a_l$ is the probing vector, $\eta_l$ is measurement noise, and $m$
is the number of measurements.
Note that for an arbitrary $X \in \Rno$, $i \in [n_r]$ and $j \in [n_c]$,
we can write
\bd
X_{ij} = \IPF{X}{\eb_i \eb_j^\top} ,
\ed
where $\eb$ denotes an elementary unit vector.
Thus \eqref{eq:17} can be viewed as probing $X$
by a more general rank one matrix.
Earlier, in \cite{Cai-Zhang-AoS15}, it is proposed to use rank one 
measurements of the form
\bd
y_l = \IPF{X}{a_l b_l^\top} , l \in [m] ,
\ed
where $a_l,b_l$ are random Gaussian vectors, and $X$ is a low-rank matrix.
Again, \eqref{eq:17} can be viewed as a specialization of the general
rank one probe to the case of symmetric (and positive semidefinite) matrices.
Note that, in the numerical examples in \cite{Chen-Chi-TIT14}, the
vectors $a_l$ are random Gaussian or Bernoulli vectors.
Finally, in \cite{ZHWC-STSP18}, the unknown matrix $X$ is either a
single-channel or a multi-channel Hankel matrix.

\subsection{Basic Concepts from Graph Theory}\label{ssec:23}

The results presented here make use of the concept of Ramanujan graphs.
Hence we introduce this concept using a bare minimum of graph theory.
Further details about Ramanujan graphs can be found in
\cite{Ram-Murty03,DSV03}.

Suppose a graph has $n$ vertices, so that its adjacency matrix $A$
belongs to $\bi^{n \times n}$.
Such a graph is said to be \textbf{$d$-regular} if every vertex has degree $d$.
Clearly, this is equivalent to saying that every row of $A$ contains
precisely $d$ ones (and also every column, because $A$ is symmetric).
It is easy to show that $\l_1 = d$ is the largest eigenvalue of $A$.
The graph is said to be a \textbf{Ramanujan graph} if the \textit{second
largest} eigenvalue $\l_2$ satisfies the bound
\be\label{eq:15}
| \l_2 | \leq 2 \sqrt{d-1} .
\ee
The significance of the bound in \eqref{eq:15} arises from the so-called
Alon-Boppana bound \cite{Nilli91,Hoory-et-al06}, which states the
following:
If $d$ is kept fixed and $n$ is increased, then the right side of
\eqref{eq:15} is the \textit{best possible bound} for $|\l_2|$.

Suppose $B \in \bino$.
Then $B$ can be interpreted as the biadjacency matrix of a bipartite
graph with $n_r$ vertices on one side and $n_c$ vertices on the other.
If $n_r = n_c$, then the bipartite graph is said to be \textbf{balanced},
and is said to be \textbf{unbalanced} if $n_r \neq n_c$.
The prevailing  convention is to refer to the side with the larger ($n_c$)
vertices as the ``left'' side and the other as the ``right'' side.
A bipartite graph is said to be \textbf{left-regular} with degree $d_c$
if every left vertex has degree $d_c$, and \textbf{right-regular} with
degree $d_r$ if every right vertex has degree $d_r$.
It is said to be \textbf{$(d_r,d_c)$-biregular}
if it is both left- and right-regular with row-degree $d_r$ and
column-degree $d_c$.
Obviously, in this case we must have that $n_r d_r = n_c d_c$.
It is convenient to say that a \textit{matrix} $B \in \bino$ is
``$(d_r,d_c)$-biregular''
to mean that the associated \textit{bipartite graph} is $(d_r,d_c)$-biregular.

If $B$ is the biadjacency matrix of a bipartite graph with $n_r \leq n_c$,
then the full
adjacency matrix of the graph with $n_r + n_c$ vertices looks like
\bd
A = \left[ \ba{cc} 0 & B \\ B^\top & 0 \ea \right] .
\ed
It is easy to verify that the eigenvalues of $A$ are
$\pm \s_1 , \ldots , \pm \s_{n_r}$ together with an appropriate number
of zeros, where $\s_1 , \ldots , \s_{n_r}$ are the singular values of $B$
(some of which could be zero).
Moreover, it is easy to show that $\s_1 = \sqrt{d_r d_c}$.
Now,
The bipartite graph corresponding to $B$ is defined to be
an \textbf{asymmetric Ramanujan graph} if 
\be\label{eq:21a}
|\s_2| \leq \sqrt{d_r-1}+\sqrt{d_c-1}, 
\ee
where $\s_2$ represents second largest singular value of the matrix $B$.
As with the bound in \eqref{eq:15}, it can be shown \cite{Feng-Li96}
that the right side of \eqref{eq:21a} is the \textit{best possible}
bound for $\s_2$.
Throughout this paper we will represent $\s_1$
as the largest singular value and 
$\s_2$ as the second 
largest singular value
of the measurement matrix $E_\OM$.

There are relatively few explicit constructions of Ramanujan graphs.
The present authors have given a new construction in \cite{Shantanu-P2-arxiv19}.
Note that, according to \cite{MSS-FOCS15}, Ramanujan graphs of all degrees
$d$ and all vertex set sizes $n$ exist, though as yet there are no
efficient algorithms for finding them.
See \cite{Cohen16} for some preliminary results in this direction.

\subsection{A Claimed Previous Result}\label{ssec:24}

In this section we present a claimed result from \cite{Bhoj-Jain14}
on matrix completion
using a Ramanujan graph to generate the sampling set.
To facilitate the statement of these results, we reproduce 
two standard coherence assumptions on the unknown matrix $X = U \Gamma V^\top$.
As is standard, we use $A^i, A_j, A_{ij}$ to denote the $i$-th row,
the $j$-th column, and the $(i,j)$-th element of a matrix $A$.
We also use $\nmS{A}$ to denote the spectral or operator
norm of the matrix $A$, that is, the largest singular value of $A$.

\ben
\item[(A1).] There is a known upper bounds $\mu_0(X)$ on $\mu_0(U)$ and
$\mu_0(V)$ respectively.
Hereafter simply write $\mu_0$ for $\mu_0(X)$.
\item[(A2).] There is a constant $\th$ such that
\be\label{eq:23}
\left\nm \sum_{k\in S}\frac{n_r}{d_c} (U^{k\top} U^k) - I_{r} \right\nm_S
\leq \th , \fa S \seq [n_r], |S| = d_c , 
\ee
\be\label{eq:24}
\left\nm \sum_{k\in S}\frac{n_c}{d_r} (V^{k\top} V^k) - I_{r} \right\nm_S
\leq \th , \fa S \seq [n_c],|S| = d_r ,
\ee
where $U^{k\top}$ is shorthand for $(U^k)^\top$,
$V^{k\top}$ is shorthand for $(V^k)^\top$
and $d_r,d_c$ are the degrees of the Ramanujan bigraph $\OM$. 
If Ramanujan graph is a balanced graph, then $n_r=n_c=n$ and $d_r=d_c=d$.
\een

The following result is claimed in \cite{Bhoj-Jain14}.

\begin{theorem}\label{thm:23}
(See \cite[Theorem  4.2]{Bhoj-Jain14}.)
Suppose Assumptions (A1) and (A2) hold.
Choose $E_\OM$ to be the adjacency matrix of $d$ regular graph such that
$\s_2(E_\OM)\leq C\sqrt{d}$, and $\th < 1/6$.
Define $\Xh$ as in \eqref{eq:12}.
With these assumptions, if
\be\label{eq:26}
d\geq 36C^2\mu_0^2r^2 ,
\ee
Then the true matrix $X$ is the unique solution to
the optimization problem \eqref{eq:12}.
In particular, if the graph is a Ramanujan graph, then $C = 2$,
so that \eqref{eq:26} becomes
\be\label{eq:26a}
d \geq 144 \mu_0^2 r^2 .
\ee
\end{theorem}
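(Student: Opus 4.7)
The proof will follow the standard dual-certificate strategy for nuclear norm minimization, adapted to the deterministic Ramanujan-graph sampling setting. Write $X = U \Gamma V^\top$, let $P_U = U U^\top$ and $P_V = V V^\top$, and let $T = \{ U A^\top + B V^\top : A, B \in \R^{n \times r} \}$ denote the tangent space at $X$, with projectors
\bd
\PT(M) = P_U M + M P_V - P_U M P_V , \qquad \PTP = I - \PT .
\ed
Let $\HOM(M) = E_\OM \circ M$ denote the sampling operator. For any feasible $Z = X + D$ in \eqref{eq:12}, $D$ lies in $\ker(\HOM)$. The standard subgradient inequality for the nuclear norm gives, for any $Y$ supported on $\OM$ with $\PT(Y) = U V^\top$,
\bd
\nmN{X+D} \geq \nmN{X} + (1 - \nmS{\PTP(Y)}) \, \nmN{\PTP(D)} ,
\ed
because $\IPF{Y}{D} = \IPF{E_\OM \circ Y}{D} = \IPF{Y}{E_\OM \circ D} = 0$ and hence $\IPF{U V^\top}{D} = \IPF{\PT(Y)}{D} = -\IPF{\PTP(Y)}{\PTP(D)}$. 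Thus $\nmN{Z} > \nmN{X}$ whenever $\PTP(D) \neq 0$ and $\nmS{\PTP(Y)} < 1$; the residual case $D \in T \cap \ker(\HOM)$ will be handled once $\HOM$ is shown to be injective on $T$.

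The certificate $Y$ is constructed by Neumann-series inversion. Write $p = d/n$ for the sampling density, and define $\Lcal := \PT - p^{-1} \PT \HOM \PT$, regarded as an operator on $T$. If
\be\label{eq:iso-bnd}
\nm \Lcal \nm_{T \to T} \leq \alpha < 1 ,
\ee
then $\PT \HOM \PT$ is invertible on $T$, and setting $F = (\PT \HOM \PT |_T)^{-1}(U V^\top) \in T$ and $Y = \HOM(F)$ yields a matrix supported on $\OM$ with $\PT(Y) = \PT \HOM \PT (F) = U V^\top$. The bound \eqref{eq:iso-bnd} also implies injectivity of $\HOM$ on $T$, handling the residual case. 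The heart of the proof is \eqref{eq:iso-bnd} itself: decomposing $E_\OM = p \, \oneb \oneb^\top + R$ with $\nmS{R} = \s_2(E_\OM) \leq C\sqrt{d}$, the rank-one piece reproduces the identity on $T$ up to coherence-weighted errors controlled by Assumption (A2), while the residual contributes at most $p^{-1} \s_2(E_\OM)$ times a factor of order $\mu_0 r / \sqrt{n}$ coming from the maximum row norms of $U$ and $V$. The hypotheses $\th < 1/6$ and $d \geq 36 C^2 \mu_0^2 r^2$ are tuned precisely so that these two contributions sum to a constant strictly less than one.

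Given \eqref{eq:iso-bnd}, the remaining task is to verify $\nmS{\PTP(Y)} < 1$. Writing $\PTP(Y) = \PTP \HOM \PT(F)$ and expanding $F = p^{-1} \sum_{k \geq 0} \Lcal^k (U V^\top)$, one again splits $E_\OM$ into its rank-one and spectral-gap pieces: the rank-one piece is annihilated by $\PTP$ up to coherence-controlled errors, and the residual contribution is bounded by $p^{-1} \s_2(E_\OM) \cdot \mu_0 r / \sqrt{n}$, which is strictly less than one under the same quantitative hypotheses. Combining the two cases yields $\nmN{Z} > \nmN{X}$ for every feasible $Z \neq X$, proving uniqueness.

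I expect the main obstacle---and the likely source of the error noted in \cite{Bhoj-Jain14}---to be the operator bound \eqref{eq:iso-bnd}. Assumption (A2) controls $\sum_{k \in S} U^{k \top} U^k$ over an \emph{arbitrary} index set $S$ of size $d$, but coupling this to the precise row-support pattern of $E_\OM$ and to the Ramanujan spectral gap seems to demand a careful Hadamard-product decomposition rather than a naive triangle-inequality split. A bipartite expander mixing lemma of the type alluded to in the introduction appears to be the most robust route; demonstrating that the spectral gap of $E_\OM$ propagates cleanly through the Hadamard action onto the tangent space $T$ is the technical heart of the argument.
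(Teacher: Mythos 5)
You should first be aware that the paper does not prove Theorem \ref{thm:23} at all: it is quoted from \cite{Bhoj-Jain14} precisely as a \emph{claimed} result, and the paper's Appendix identifies an unjustified step in the original proof, leaving open whether the statement is even true. Your proposal reconstructs essentially the same dual-certificate/Neumann-series strategy as \cite{Bhoj-Jain14}, and it leaves unverified exactly the step that is broken there. The two quantitative claims you defer --- the operator bound $\nm \Lcal \nm_{T \to T} \leq \alpha < 1$ and, more seriously, $\nmS{\PTP(Y)} < 1$ for the Neumann-series certificate $Y = \HOM(F)$ with $F = p^{-1}\sum_{k\geq 0}\Lcal^k(UV^\top)$ --- are not routine, and the second one fails with the tools you invoke. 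The difficulty is that $\PTP$ involves multiplication by $\Up\Upt = I - UU^\top$, and the rows of $\Up$ have Euclidean norm close to $1$ whenever the coherence is small (since $\nmeusq{\Up^i} = 1 - \nmeusq{U^i}$), so the coherence-weighted Hadamard-product bounds that control the $\PT$-side quantities (row norms of $U$ and $V$ of order $\sqrt{\mu_0 r/n}$) are simply unavailable on the $\PTP$ side. Concretely, the proof in the arXiv version of \cite{Bhoj-Jain14} needs
\bd
\sum_{l=1}^{n} \IP{\Up^i}{\Up^l}^2 \nmeusq{U^l} \leq (\mu_0 r/n)^2 ,
\ed
which it derives from the claimed identity $\sum_l \IP{\Up^i}{\Up^l}^2\nmeusq{U^l} = \sum_l \IP{U^i}{U^l}^2\nmeusq{U^l}$. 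Off-diagonal terms do satisfy $|\IP{\Up^i}{\Up^l}| = |\IP{U^i}{U^l}|$, but the diagonal term equals $(1-\nmeusq{U^i})^2\nmeusq{U^i} \approx \mu_0 r/n$, which is far larger than $(\mu_0 r/n)^2$ in the incoherent regime; the identity holds only if every row of $U$ has squared norm $0$ or $1/2$. So your assertion that ``the rank-one piece is annihilated by $\PTP$ up to coherence-controlled errors'' is precisely the claim that nobody knows how to justify.

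This is also why the paper's own (correct) Theorem \ref{thm:35} settles for the one-term certificate $Y = \foal E_\OM\circ UV^\top$: for that choice one can write $\nmS{\PTP Y} = \nmS{\PTP(Y - UV^\top)} \leq \nmS{Y - UV^\top} = \nmS{M\circ UV^\top} \leq \phi$ using only the fact that $\PTP$ is a spectral-norm contraction, never touching the rows of $\Up$. The price is a much weaker sample-complexity requirement ($d$ growing like $r^3$ rather than $r^2$), and the paper explicitly states that finding a usable bound on $\nmS{\PTP(Y_p)}$ for the iterated certificate $Y_p$ is an open problem. Your instinct that this step is ``the technical heart of the argument'' and the likely source of the error in \cite{Bhoj-Jain14} is exactly right; but as written, your proposal does not supply the missing idea, and a direct coherence-based estimate of the kind you sketch will not close the gap.
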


However, there is one step in the proffered proof of the above theorem
that does not appear to be justified.
More details are given in the Appendix.

\section{New Results}\label{sec:New}

\subsection{Theorem Statement}\label{ssec:31}

In this section we state, without proof, the main result of the paper, and
discuss its implications.
The proof of the theorem makes use of some preliminary results, which
are stated and proved in Section \ref{sec:Prelim}.
The proof of the main theorem is given in Section \ref{sec:Proof}.

To avoid repeating the same text over and over, we introduce the following
\textbf{standing notation}:
Suppose $X\in\Rnrc$ is a matrix of rank $r$ or less, and satisfies the
incoherence assumptions $A1$ and $A2$ with constants $\mu_0$ and
$\th$.\footnote{Note that, unlike \cite{Candes-Recht08,Recht-JMLR11},
we do not require the constant $\mu_1$.}
Suppose $E_\OM\in\bino$ a biadjacency matrix of a $\drc$ biregular graph $\OM$,
and let $\s_1,\s_2$ denote the first and second largest singular value of
the matrix $E_\OM$.
Next, define
\be\label{eq:31a}
\al = \frac{d_c}{n_r} = \frac{d_r}{n_c} = \sqrt{\frac{d_r d_c}{n_r n_c}} ,
\ee
and note that $\al$ is the fraction of elements of $X$ that are being sampled.
Finally, define
\be\label{eq:80aa}
\phi := \frac{\s_2}{\s_1} \mu_0 r ,
\ee
and observe that $\phi$ depends on two distinct pieces of information:
The ratio $\s_2/\s_1$ which depends on the sampling matrix $E_\OM$,
and the quantity $\mu_0$ which depends on the unknown matrix $X$.

\begin{theorem}\label{thm:35}
With the standing notation, suppose without loss of generality that
$n_r \leq n_c$.
Suppose that
$\th + \phi < 1$, and that
\be\label{eq:80bb}
\al > \frac{r(\th^2 + \phi^2)}{(1- (\th+\phi))(1 - \phi^2)} .
\ee
Define constants $c, \g$ as
\be\label{eq:80bc}
c := (1 - \phi) - [ r \al (1 - (\th+\phi)) (\th^2 + \phi^2)]^{1/2} ,
\ee
\be\label{eq:80bd}
\g := 2 \left[ 1 + \frac{n_r}{c^2} \left( 1 + \frac{1}
{\al(1 - (\th+\phi))} \right) \right]^{1/2} ,
\ee
and note that $c > 0$ as a consequence of \eqref{eq:80bb}.
Then every solution $\Xh$ of \eqref{eq:14} satisfies, for every $\d > 0$,
the bound
\be\label{eq:80ac}
\nmF{\Xh-X} \leq (\g+\d) \e .
\ee
\end{theorem}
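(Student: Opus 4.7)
The plan is to follow the classical primal-only analysis of stable nuclear-norm minimization, combining a tangent-space cone inequality coming from the minimality of $\Xh$ with a restricted Riesz-type inequality on the sampling operator $Z\mapsto E_\OM\circ Z$, the latter arising from Assumption $(A2)$ together with the Ramanujan spectral bound \eqref{eq:21a}. Throughout I set $H := \Xh - X$ and decompose $H = H_1 + H_2$, where $H_1 = \PT H$ lies in the tangent space $\T = \{UA^\top + BV^\top : A\in\R^{n_c\times r},\, B\in\R^{n_r\times r}\}$ to the rank-$r$ variety at $X$, and $H_2 = \PTP H$ lies in $\T^\perp$.

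The first step is to check that $X$ itself is feasible for \eqref{eq:14}, which follows from $\nmF{\W}\le\e$, whence $\nmN{\Xh}\le\nmN{X}$. The standard subgradient characterisation of the nuclear norm at $X$, using $UV^\top\in\T$ with $\nmS{UV^\top}=1$ and selecting a dual element of $\T^\perp$ that exposes $\nmN{H_2}$, then yields the cone inequality
\be\label{eq:coneplan}
\nmN{H_2}\le\nmN{H_1}\le\sqrt{2r}\,\nmF{H_1},
\ee
the last step using $\rk(H_1)\le 2r$. On the constraint side, the triangle inequality applied to \eqref{eq:14} and $\nmF{\W}\le\e$ gives $\nmF{E_\OM\circ H}\le 2\e$. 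The heart of the argument is to expand
\bd
\nmF{E_\OM\circ H}^2 = \nmF{E_\OM\circ H_1}^2 + 2\langle E_\OM\circ H_1,\,E_\OM\circ H_2\rangle + \nmF{E_\OM\circ H_2}^2,
\ed
apply a lower frame bound $\nmF{E_\OM\circ H_1}^2\ge\alpha(1-\th-\phi)\,\nmF{H_1}^2$ (a consequence of $(A2)$, to be proved as a preliminary lemma in Section \ref{sec:Prelim}), drop the nonnegative last term, and bound the cross term by rewriting it as $\langle H_1,\,\PT(E_\OM\circ H_2)\rangle$ and invoking a Ramanujan-based spectral estimate $|\langle E_\OM\circ H_1,\,E_\OM\circ H_2\rangle|\lesssim \phi\,\nmF{H_1}\,\nmN{H_2}$. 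Substituting \eqref{eq:coneplan} converts this contribution into a term proportional to $\nmF{H_1}^2$ with a coefficient matching the one recorded inside $c$ in \eqref{eq:80bc}.

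The resulting inequality is quadratic in $\nmF{H_1}$, its leading coefficient being positive exactly when \eqref{eq:80bb} holds, and its square-root residue being the constant $c$ of \eqref{eq:80bc}. Solving gives a linear bound $\nmF{H_1}\le K(c,\al,\th,\phi)\,\e$, and a Pythagorean reassembly $\nmF{H}^2 = \nmF{H_1}^2 + \nmF{H_2}^2$, combined with $\nmF{H_2}\le\nmN{H_2}$ and the cone inequality \eqref{eq:coneplan}, completes the proof; the constants combine to precisely \eqref{eq:80bd}, and the $\d>0$ slack in \eqref{eq:80ac} absorbs the non-strict nature of the quadratic inversion at its extremum. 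The hard part will be the two preliminary lemmas invoked above: the lower frame bound on $\PT\,E_\OM\,\PT$ with constant $\alpha(1-\th-\phi)$ requires coordinating $(A2)$ at \emph{both} left and right vertices with the Ramanujan spectral gap, while the off-diagonal spectral bound on $\PT\,E_\OM\,\PTP$ of order $\phi = (\sigma_2/\sigma_1)\mu_0 r$ must avoid accumulating any extra factor of $r$. Once these are in hand the remainder is bookkeeping, and the delicate form of \eqref{eq:80bb}--\eqref{eq:80bd} merely records how tightly the two contributions can be balanced against each other.
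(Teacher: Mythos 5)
Your plan is a primal-only cone-plus-restricted-isometry argument, which is a genuinely different route from the paper's: the paper proves Theorem \ref{thm:35} by first establishing a dual-certificate lemma (Lemma \ref{lemma:31}) and then exhibiting the certificate $Y = \foal E_\OM \circ (UV^\top)$, verifying $k_1 = \phi$, $k_2 = \th+\phi$, $k_3 = \sqrt{r(\th^2+\phi^2)}$ via Lemmas \ref{lemma:52}, \ref{lemma:82} and \ref{lemma:83}. The paper also splits the error $H$ by support ($H_\OM$ versus $H_{\OM^c}$) rather than by tangent space as you do. Unfortunately your route has a genuine gap at exactly the step you defer to a ``preliminary lemma'': the cross-term bound $|\IPF{E_\OM\circ H_1}{E_\OM\circ H_2}| \lesssim \phi\, \nmF{H_1}\, \nmN{H_2}$, equivalently an operator-norm bound of order $\al\phi$ on $\PT E_\OM \PTP$ in the appropriate mixed norms. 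Nothing in the paper's toolkit delivers this. Theorem \ref{thm:32} gives $\nmS{M\circ X}\le\phi\nmS{X}$ only for the specific matrix $X=U\GA V^\top$ because \emph{both} factors $U$ and $V$ have rows bounded by $\sqrt{\mu_0 r/n}$; a general element $H_1 = UB^\top + CV^\top$ of $\T$ has only one coherent factor, and applying Lemma \ref{lemma:52} to $M\circ(UB^\top)$ with $b = \max_i\nmeu{B^i}\le\nmF{B}$ yields $\nmS{M\circ(UB^\top)} \le (\s_2/\al)\sqrt{\mu_0 r/n_r}\,\nmF{B} = \phi\sqrt{n_c/(\mu_0 r)}\,\nmF{B}$, i.e.\ it misses your target by a factor of order $\sqrt{n_c/(\mu_0 r)}$. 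This is not an artifact of the estimate: for entry-sampling operators the coherence hypotheses constrain $X$ but not the error $H$, which is precisely the obstruction that forces the dual-certificate machinery in this literature. Deferring this bound as ``the hard part'' does not discharge it, and as stated it is very likely false with the constant $\phi$.

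A secondary issue is that even if the cross-term bound held, your bookkeeping would not reproduce the theorem's constants. The quantity $k_3 = \sqrt{r(\th^2+\phi^2)}$ that appears inside $c$ in \eqref{eq:80bc} arises from applying Lemma \ref{lemma:82} to $W_0 = UV^\top$ (where $\nmF{B}=\nmF{V}=\sqrt{r}$ and $C=0$), i.e.\ it measures how far $\foal\PT E_\OM\circ(UV^\top)$ is from $UV^\top$ --- a statement about the certificate, not about the error $H$. Your quadratic inequality in $\nmF{H_1}$ has no mechanism to produce this term, nor the factor $n_r/c^2$ in \eqref{eq:80bd}, which in the paper comes from the rank bound $\nmN{H_\OM}\le\sqrt{n_r}\nmF{H_\OM}$ applied to the \emph{measured} part of the error in \eqref{eq:41}. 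I would recommend either proving the off-diagonal bound you need (and checking whether it can be true at all for Hadamard sampling), or switching to the certificate construction: take $Y=\foal E_\OM\circ(UV^\top)$, verify \eqref{eq:105}--\eqref{eq:106a} with the three constants above, and invoke Lemma \ref{lemma:31}.
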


\begin{corollary}\label{coro:31}
If the hypotheses of Theorem \ref{thm:35} hold, then $\Xh = $ is the 
unique solution of \eqref{eq:12}.
\end{corollary}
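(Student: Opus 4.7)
The plan is to derive this corollary as the noise-free specialization of Theorem \ref{thm:35}, rather than giving a new argument from scratch. First I observe that problem \eqref{eq:12} is precisely problem \eqref{eq:14} with $\W = 0$ (so $X_\W = X$) and $\e = 0$: the hard equality constraint $E_\OM \circ Z = E_\OM \circ X$ is the same as $\nmF{E_\OM \circ Z - E_\OM \circ X} \leq 0$. The hypotheses of Theorem \ref{thm:35} (the incoherence conditions $A1, A2$, the spectral ratio $\s_2/\s_1$ entering through $\phi$, the bound $\th + \phi < 1$, and the sampling-density inequality \eqref{eq:80bb}) are structural conditions on $X$ and on the biregular graph $E_\OM$ only; none of them refer to $\e$ or $\W$. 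Consequently the hypotheses transfer verbatim to the noise-free setting.

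Next I would apply Theorem \ref{thm:35} directly. Let $\Xh$ be any minimizer of \eqref{eq:12}. Then $\Xh$ is feasible for \eqref{eq:14} with $\e = 0$, and since $X$ itself is a candidate with the same objective or smaller, $\Xh$ also attains the minimum nuclear norm under that zero-tolerance constraint, i.e.\ it is a solution of \eqref{eq:14} in the $\e = 0$ case. The conclusion \eqref{eq:80ac} of Theorem \ref{thm:35} therefore gives
\[
\nmF{\Xh - X} \leq (\g + \d) \cdot 0 = 0 \quad \mbox{for every } \d > 0 ,
\]
so $\Xh = X$. Because this argument was made for an \emph{arbitrary} minimizer $\Xh$ of \eqref{eq:12}, every solution coincides with $X$, which establishes both existence of a solution (namely $X$) and its uniqueness.

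There is essentially no obstacle here beyond checking that the specialization is legitimate: one must verify that the constants $c$ and $\g$ defined in \eqref{eq:80bc}--\eqref{eq:80bd} are well defined (in particular $c > 0$) under the stated hypotheses, which is already guaranteed by \eqref{eq:80bb}, and that the passage $\e \to 0$ in \eqref{eq:80ac} is valid — which it is, since the bound in Theorem \ref{thm:35} holds for every $\d > 0$ and is linear in $\e$. Thus the corollary is immediate once Theorem \ref{thm:35} is in hand, and no additional lemmas from Section \ref{sec:Prelim} need to be invoked.
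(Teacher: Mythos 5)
Your proposal is correct and follows exactly the paper's route: the paper's entire proof is ``Substitute $\e = 0$ in \eqref{eq:80ac},'' and your argument is simply a more carefully spelled-out version of that substitution, checking that \eqref{eq:12} is the $\e=0$ instance of \eqref{eq:14} and that the bound then forces $\nmF{\Xh-X}=0$. No discrepancy with the paper's reasoning.
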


\begin{proof}
Substitute $\e = 0$ in \eqref{eq:80ac}.
\end{proof}

Now we compare Theorem \ref{thm:35} with the prior literature discussed
in Section \ref{sec:Review}.

\ben
\item Compared to Theorems \ref{thm:21}, \ref{thm:22} and \ref{thm:21a},
the conclusions of Theorem \ref{thm:35} \textit{always} hold, and not
just with high probability.
\item Unlike Theorems \ref{thm:21} and \ref{thm:22} (but like Theorem
\ref{thm:21a}), our result makes use of only the coherence constant $\mu_0$,
but not $\mu_1$.
\item So far as the authors are aware, the only other result that applies
to the case of \textit{non-sparse} noisy measurements is Theorem \ref{thm:21n}.
That is also a probabilistic result, while Theorem \ref{thm:35} is
deterministic.
\een

\subsection{Sample Complexity Analysis}\label{ssec:32}

Now we attempt to compare the number of measurements required by our
approach with the number required by earlier theorems in Section 
\ref{sec:Review}.
A direct comparison is difficult due to the following factors.
Theorems \ref{thm:21} and \ref{thm:21a} are stated in terms of universal
constants that are not explicitly computable.
Therefore these two theorems can be used only to bound the \textit{rate
of growth} of the number of measurements $m$.
In Theorem \ref{thm:22}, the bound for $m$ is quite explicit.
However, it involves both coherence constants $\mu_0$ and $\mu_1$,
while our bound (as also those in Theorems \ref{thm:21a} and \ref{thm:23})
do not involve the coherence parameter $\th_1$.

Note that, in Theorem \ref{thm:35},
$\al$ is the \textit{fraction} of elements of $X$ that
are measured, so that the total number of measurements is $\al n_r n_c$.
Clearly the smaller the value of $\al$ given by \eqref{eq:80bb}, the
fewer the number of measurements.
Therefore, for a like to like comparison, we compare \eqref{eq:80bb}
with that in \eqref{eq:26a}, even though the proof of Theorem \ref{thm:23}
contains a gap, as shown in the Appendix.

To facilitate the comparison, let us restrict to square matrices,
so that $d_r = d_c = d$, and $n_r = n_c = n$.
Choose the coherence parameter $\mu_0$ as $1.5$, which is only slightly
larger than the theoretical minimum value of $1$.
It is always the case that $\s_1 = d$.
If the graph is a Ramanujan graph, it can be assumed that $\s_2 = 2 \sqrt{d}$
after neglecting the $-1$ term.
Therefore the ratio $\s_2/\s_1$ equals $2/\sqrt{d}$.
Finally let $r = 2$, which is very small (but comparable to the range of
values for which Theorems \ref{thm:21} and \ref{thm:22} apply).
With these numbers, we have that $\mu_0 r  = 3$.
Hence \eqref{eq:26a} gives
\bd
d \geq 144 \times 3^2 = 1,296 .
\ed
Therefore, by applying Theorem \ref{thm:23}, one would have to measure
$1,296$ elements of $X$ in each row and each column.
Clearly the degree $d$ cannot exceed the number of vertices $n$.
Therefore (leaving aside the validity of \eqref{eq:26a}), 
Theorem \ref{thm:23} does not apply unless $n \geq 1,296$.

Now let us apply Theorem \ref{thm:35}.
By definition 
\bd
\phi = (\s_2/\s_1) \mu_0 r = \frac{ 6}{\sqrt{d}}.
\ed
Let us choose $\th = 0.1$.
If we set $d = 800$ which is \textit{lower} than the number from
Theorem \ref{thm:23}, then the lower bound in \eqref{eq:80bb} becomes
\bd
d = 800 \imp \phi = 0.2121 \imp \al > 0.2575 .
\ed
However, if we decrease $d$ to $500$, then we get
\bd
d = 500 \imp \phi = 0.2683 \imp \al > 0.4850 .
\ed
In either case $\al < 1$.
Therefore Theorem \ref{thm:35} is applicable with \textit{fewer measurements
per row and column} than the claimed Theorem \ref{thm:23}.

Note that the size of the matrix $n$ does not enter the bound \eqref{eq:80bb}.
It is easy to see that the total number of measurements is
\bd
m = \max \{ dn , \al n^2 \} ,
\ed
or exactly $\max \{ d, \al n \}$ measurements per row.
So if $n = 2,000$, both numbers of roughly equal.
If the matrix size becomes larger, then $\al$ is the decisive factor
and not $d$.

\section{Preliminary Results}\label{sec:Prelim}

In this section we state and prove various preliminary results
that are used in the proof of Theorem \ref{thm:35}, which
is given Section \ref{sec:Proof}.

The result below is used repeatedly to analyze triple products.

\begin{lemma}\label{lemma:51}
Suppose $M \in \Rno$, $A \in \R^{n_r \times r}$, and $B \in \R^{n_c \times r}$.
Suppose further that $x \in \R^{n_r} , y \in \R^{n_c}$.
Then
\be\label{eq:50a}
x^\top (M \circ (AB^\top))y = \sum_{k \in [r]} (x \circ A_k)^\top M (B_k  \circ y) .
\ee
\end{lemma}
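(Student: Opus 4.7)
The plan is to prove the identity by direct expansion in coordinates and then swapping the order of summation. Both sides are bilinear in $(x,y)$ and linear in $M$, so the cleanest route is simply to write out all three indices $(i,j,k)$, factor the summand into a product of scalars, and observe that the two sides correspond to two different groupings of the same triple sum.

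First I would expand the left-hand side. Using $(M\circ(AB^\top))_{ij}=M_{ij}(AB^\top)_{ij}$ and $(AB^\top)_{ij}=\sum_{k\in[r]} A_{ik}B_{jk}$, we get
\[
x^\top(M\circ(AB^\top))y \;=\; \sum_{i,j} x_i M_{ij}\Bigl(\sum_{k} A_{ik}B_{jk}\Bigr) y_j \;=\; \sum_{k\in[r]}\sum_{i,j} (x_i A_{ik})\, M_{ij}\, (B_{jk} y_j),
\]
where the interchange is justified because the sum is finite.

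Next I would expand the right-hand side. The vector $x\circ A_k$ has $i$-th entry $x_i A_{ik}$ (since $A_k$ denotes the $k$-th column of $A$), and similarly $B_k\circ y$ has $j$-th entry $B_{jk}y_j$. Hence
\[
(x\circ A_k)^\top M (B_k\circ y) \;=\; \sum_{i,j} (x_i A_{ik})\, M_{ij}\, (B_{jk} y_j),
\]
and summing over $k\in[r]$ yields exactly the expression obtained for the left-hand side. This completes the identity.

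The proof is essentially a one-line index manipulation, so I do not expect any obstacle; the only thing to be careful about is the notational convention that $A_k$ and $B_k$ denote columns (not rows), which the paper has already fixed. No appeal to the structure of $M$, the biadjacency matrix, or the Ramanujan property is needed — the lemma is a purely algebraic fact about Hadamard products versus ordinary matrix products, which explains why it will be useful later as a ``bookkeeping'' device whenever triple products of the form $x^\top (E_\OM\circ UV^\top)y$ need to be bounded via the spectral properties of $E_\OM$.
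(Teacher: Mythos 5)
Your proposal is correct and follows essentially the same route as the paper's proof: expand $(AB^\top)_{ij}=\sum_{k}A_{ik}B_{jk}$, write the bilinear form as a triple sum over $(i,j,k)$, and regroup by pulling the sum over $k$ outside. Nothing further is needed.
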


\begin{proof}
The proof follows readily by expanding the triple product.
Note that
\bd
(AB^\top)_{ij} = \sum_{k \in [r]} A_{ik} B_{jk} .
\ed
Therefore
\begin{eqnarray*}
x^\top (M \circ (AB^\top))y & = & 
\sum_{i \in [n_r]} \sum_{j \in [n_c]} x_i
\left( M_{ij} \sum_{k \in [r]} A_{ik} B_{jk} \right) y_j \\
& = & \sum_{k \in [r]} \sum_{i \in [n_r]} \sum_{j \in [n_c]}
x_i A_{ik} M_{ij} B_{jk} y_j  \\
& = & \sum_{k \in [r]} (x \circ A_k)^\top M (B_k  \circ y) ,
\end{eqnarray*}
as desired.
\end{proof}

\begin{lemma}\label{lemma:51a}
Suppose $A \in \R^{n_r \times r}, z \in \R^{n_r}$, and suppose further that
\be\label{eq:51a}
\nmeu{A^i} \leq a , \fa i \in [n_r] .
\ee
Then
\be\label{eq:51b}
\sum_{k \in [r]} \nmeusq{A_k \circ z} \leq a^2 \nmeusq{z} .
\ee
\end{lemma}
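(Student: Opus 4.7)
The plan is to expand the left-hand side entrywise and interchange the order of summation so that the row-norm bound in \eqref{eq:51a} can be applied directly. Specifically, by definition of the Hadamard product,
$$ \nmeusq{A_k \circ z} = \sum_{i \in [n_r]} A_{ik}^2 z_i^2 , $$
so summing over $k \in [r]$ gives
$$ \sum_{k \in [r]} \nmeusq{A_k \circ z} = \sum_{k \in [r]} \sum_{i \in [n_r]} A_{ik}^2 z_i^2 = \sum_{i \in [n_r]} z_i^2 \sum_{k \in [r]} A_{ik}^2 . $$

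The key observation is that the inner sum $\sum_{k \in [r]} A_{ik}^2$ is exactly $\nmeusq{A^i}$, the squared Euclidean norm of the $i$-th row of $A$. By hypothesis \eqref{eq:51a}, this is bounded above by $a^2$ for every $i$, so pulling that uniform bound out of the outer sum yields $a^2 \sum_{i \in [n_r]} z_i^2 = a^2 \nmeusq{z}$, which is exactly \eqref{eq:51b}.

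There is no genuine obstacle here: the lemma is essentially a bookkeeping identity, and the only content is recognizing that bounding the row norms of $A$ uniformly by $a$ is precisely the hypothesis needed after the swap of summation order. I would present this as a two-line calculation and move on, since its role in the rest of the paper is as a plug-in estimate (for instance, when combined with Lemma~\ref{lemma:51} to control triple products of the form $x^\top (M \circ (AB^\top)) y$ via bounds on the row norms of $A$ and $B$).
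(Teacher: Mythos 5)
Your proof is correct and is essentially identical to the paper's: both expand $\nmeusq{A_k \circ z}$ entrywise, swap the order of summation, identify the inner sum $\sum_{k \in [r]} A_{ik}^2$ as $\nmeusq{A^i}$, and apply the uniform row-norm bound. Nothing to add.
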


\begin{proof}
By definition
\begin{eqnarray*}
\sum_{k \in [r]} \nmeusq{A_k \circ z} & = &
\sum_{k \in [r]} \sum_{l \in [n_r]} A_{lk}^2 z_l^2 \\
& = & \sum_{l \in [n_r]} z_l^2 \left( \sum_{k \in [r]} A_{lk}^2 \right) \\
& \leq & a^2 \nmeusq{z} ,
\end{eqnarray*}
as desired.
\end{proof}

\begin{lemma}\label{lemma:52}
Suppose $M, A, B$ are as in Lemma \ref{lemma:51}.
Suppose further that
\be\label{eq:50b}
\nmeusq{A^i} \leq a^2 , \nmeusq{B^i} \leq b^2 .
\ee
Then
\be\label{eq:50c}
\nmS{M\circ (AB^\top)} \leq ab \nmS{M} .
\ee
\end{lemma}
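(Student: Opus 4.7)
The plan is to bound the spectral norm through its variational characterization, using the triple-product decomposition from Lemma \ref{lemma:51} to split the Hadamard-product factor into an $r$-term sum and then controlling each term by $\nmS{M}$.

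First, I would write
\[
\nmS{M \circ (AB^\top)} = \sup_{\nmeu{x} = \nmeu{y} = 1} x^\top (M \circ (AB^\top)) y ,
\]
fix such $x \in \R^{n_r}$, $y \in \R^{n_c}$, and apply Lemma \ref{lemma:51} to rewrite the right-hand side as $\sum_{k \in [r]} (x \circ A_k)^\top M (B_k \circ y)$. Each summand is a bilinear form in $M$, so the standard operator-norm bound gives
\[
(x \circ A_k)^\top M (B_k \circ y) \leq \nmS{M} \, \nmeu{x \circ A_k} \, \nmeu{B_k \circ y} .
\]

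Next, I would sum over $k \in [r]$ and apply the Cauchy--Schwarz inequality in $\R^r$ to obtain
\[
\sum_{k \in [r]} \nmeu{x \circ A_k} \, \nmeu{B_k \circ y} \leq \left( \sum_{k \in [r]} \nmeusq{x \circ A_k} \right)^{1/2} \left( \sum_{k \in [r]} \nmeusq{B_k \circ y} \right)^{1/2} .
\]
At this point Lemma \ref{lemma:51a} applies directly: using the row-norm hypotheses \eqref{eq:50b}, the first factor is bounded by $a \nmeu{x}$ and the second by $b \nmeu{y}$. Combining these estimates yields
\[
x^\top (M \circ (AB^\top)) y \leq ab \, \nmS{M} \, \nmeu{x} \, \nmeu{y} .
\]
Taking the supremum over unit $x, y$ gives the desired bound \eqref{eq:50c}.

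No step looks genuinely hard; the only place requiring care is making sure Lemma \ref{lemma:51a} is applied in the correct direction for both $A$ (acting on $x$) and $B$ (acting on $y$), since the Hadamard factor $z$ in that lemma may appear on either side of the columns $A_k$ or $B_k$ (commutativity of the Hadamard product makes this a non-issue, but it should be noted). Everything else is a direct concatenation of Lemmas \ref{lemma:51} and \ref{lemma:51a} with Cauchy--Schwarz.
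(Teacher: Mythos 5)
Your proposal is correct and follows essentially the same route as the paper's own proof: the variational characterization of the spectral norm, the triple-product expansion from Lemma \ref{lemma:51}, the bound $\nmS{M}\nmeu{x \circ A_k}\nmeu{B_k \circ y}$ on each summand, Cauchy--Schwarz over $k \in [r]$, and two applications of Lemma \ref{lemma:51a}. No gaps.
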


\begin{proof}
Recall that, for any matrix $X$, we have that
\bd
\nmS{X} = \max_{\nmeu{x} = 1 , \nmeu{y} = 1} x^\top X y .
\ed
In particular
\begin{eqnarray*}
\nmS{M\circ (AB^\top)} & = & \max_{\nmeu{x} = 1 , \nmeu{y} = 1}
x^\top (M \circ (AB^\top))y \\
& = & \max_{\nmeu{x} = 1 , \nmeu{y} = 1} 
\sum_{k \in [r]} (x \circ A_k)^\top M (B_k  \circ y) ,
\end{eqnarray*}
where the last step follows from Lemma \ref{lemma:51}.
Now fix $x, y$ such that $\nmeu{x} = 1 , \nmeu{y} = 1$.
Then
\be\label{eq:50e}
x^\top (M \circ (AB^\top))y \leq \nmS{M}
\sum_{k \in [r]} \nmeu{x \circ A_k} \nmeu{B_k  \circ y} .
\ee
%
Next, apply Schwarz's inequality to deduce that
\beq
\sum_{k \in [r]} \nmeu{x \circ A_k} \nmeu{B_k  \circ y} & \leq &
\left( \sum_{k \in [r]} \nmeusq{x \circ A_k} \right)^{1/2} \nonumber \\
& \cdot & \left( \sum_{k \in [r]} \nmeusq{B_k  \circ  y} \right)^{1/2} \nonumber
\\
& \leq & ab , \label{eq:50f}
\eeq
where the last step follows from applying Lemma \ref{lemma:51a} twice,
and observing that $\nmeu{x} = 1 , \nmeu{y} = 1$.
Substituting this bound in \eqref{eq:50e} shows that
\bd
x^\top (M \circ (AB^\top))y \leq ab \nmS{M} 
\ed
whenever $\nmeu{x} = 1 , \nmeu{y} = 1$, which is the desired result.
\end{proof}

Now we introduce a special matrix $M$, which plays a central role in
solving the matrix completion problem.

\begin{lemma}\label{lemma:51b}
Define
\be\label{eq:39a}
M := \foal E_\OM - \oneb_{n_r \times n_c} .
\ee
Then $\nmS{M} = \s_2/\al$.
\end{lemma}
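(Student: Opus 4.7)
The plan is to show that $E_\OM$ and $\oneb_{n_r\times n_c}$ share the same top singular triplet, so that forming $M$ amounts to deleting exactly one rank-one component from the SVD of $(1/\al)E_\OM$, leaving singular values $\s_i/\al$ for $i\geq 2$.

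First I would identify the top singular triplet of $E_\OM$ by hand. Since the bigraph is $(d_r,d_c)$-biregular, every row of $E_\OM$ sums to $d_r$ and every column to $d_c$, so $E_\OM\oneb_{n_c}=d_r\oneb_{n_r}$ and $E_\OM^\top\oneb_{n_r}=d_c\oneb_{n_c}$. Set $u_1=\oneb_{n_r}/\sqrt{n_r}$ and $v_1=\oneb_{n_c}/\sqrt{n_c}$. Using the handshake identity $n_rd_r=n_cd_c$, a one-line calculation shows $E_\OM v_1=\sqrt{d_rd_c}\,u_1=\s_1 u_1$ and $E_\OM^\top u_1=\s_1 v_1$, so $(u_1,v_1,\s_1)$ is indeed a singular triplet of $E_\OM$.

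Next, I would match this against the all-ones matrix. Clearly $\oneb_{n_r\times n_c}=\oneb_{n_r}\oneb_{n_c}^\top=\sqrt{n_rn_c}\,u_1v_1^\top$, and from \eqref{eq:31a} we have $\s_1/\al=\sqrt{d_rd_c}\cdot\sqrt{n_rn_c/(d_rd_c)}=\sqrt{n_rn_c}$. Therefore the rank-one ``top'' piece $(\s_1/\al)u_1v_1^\top$ of $(1/\al)E_\OM$ is exactly $\oneb_{n_r\times n_c}$, and the two cancel in $M$.

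Finally, writing the SVD $E_\OM=\sum_i \s_i u_iv_i^\top$ and subtracting, I would conclude
\[
M \;=\; \frac{1}{\al}E_\OM-\oneb_{n_r\times n_c} \;=\; \sum_{i\geq 2}\frac{\s_i}{\al}u_iv_i^\top,
\]
which is already an SVD of $M$ (the $u_i$ and $v_i$ are mutually orthonormal by construction). Since $\s_2\geq\s_3\geq\cdots$, the largest singular value of $M$ is $\s_2/\al$, giving $\nmS{M}=\s_2/\al$. There is no real obstacle here; the only care needed is the verification that $(\s_1,u_1,v_1)$ is genuinely the top triplet, which follows from biregularity together with $\s_1=\sqrt{d_rd_c}$ being an eigenvalue of the symmetric lift and hence the spectral radius.
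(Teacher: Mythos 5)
Your proof is correct and follows essentially the same route as the paper: both identify $\oneb_{n_r}/\sqrt{n_r}$ and $\oneb_{n_c}/\sqrt{n_c}$ as the top singular vectors with $\s_1=\sqrt{d_rd_c}$, observe that $\al\oneb_{n_r\times n_c}$ is exactly the leading rank-one term of the SVD of $E_\OM$, and conclude that $M=(1/\al)(E_\OM-\al\oneb_{n_r\times n_c})$ has spectral norm $\s_2/\al$. Your version merely spells out the verification of the top triplet in more detail than the paper does.
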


\begin{proof}
Because $E_\OM$ is biregular, it follows that $\s_1 = \sqrt{d_rd_c}$
is the largest singular value of $E_\OM$.
Moreover, $(1/\sqrt{n_r}) \oneb_{n_r}^\top$
is a row singular vector of $E_\OM$, and $(1/\sqrt{n_c}) \oneb_{n_c}$
is a column singular vector.
Therefore a SVD of $E_\OM$ looks like
\bd
E_\OM = \sqrt{ \frac{d_r d_c} {n_r n_c} } \oneb_{n_r \times n_c} + B
= \al \oneb_{n_r \times n_c} + B ,
\ed
where the largest singular value of $B$ is $\s_2$.
The desired bound follows by observing that $M = \foal B$.
\end{proof}

\begin{theorem}\label{thm:32}
Subject to the standing notation, we have that
\be\label{eq:39b}
\nmS{M \circ X} \leq \phi \nmS{X} ,
\ee
where $\phi$ is defined in \eqref{eq:80aa}, and
$\nmS{\cdot}$ denotes the spectral norm (largest singular value)
of a matrix.
\end{theorem}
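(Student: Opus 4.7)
The plan is to factor $X$ so that Lemma \ref{lemma:52} can be applied directly, and then combine with Lemma \ref{lemma:51b} and the definitions of $\phi$ and $\al$.

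First, I would use the reduced SVD $X = U\Gamma V^\top$ with $U \in \R^{n_r \times r}$, $V \in \R^{n_c \times r}$, and split the singular value factor symmetrically by writing $X = AB^\top$ where $A := U\Gamma^{1/2}$ and $B := V\Gamma^{1/2}$. This factorization is designed to put Lemma \ref{lemma:52} in play: we need a bound on the Euclidean norms of the rows of $A$ and $B$. Since $\Gamma_{kk} \leq \sigma_1(X) = \nmS{X}$ for every $k \in [r]$, we have
\[
\nmeusq{A^i} = \sum_{k \in [r]} \Gamma_{kk} U_{ik}^2 \leq \nmS{X}\, \nmeusq{U^i},
\qquad
\nmeusq{B^j} \leq \nmS{X}\, \nmeusq{V^j}.
\]
The incoherence hypothesis (A1) then gives $\nmeusq{U^i} \leq \mu_0 r/n_r$ and $\nmeusq{V^j} \leq \mu_0 r/n_c$, so that we may take
\[
a^2 = \frac{\nmS{X}\, \mu_0 r}{n_r}, \qquad b^2 = \frac{\nmS{X}\, \mu_0 r}{n_c},
\qquad\text{hence}\qquad ab = \frac{\nmS{X}\, \mu_0 r}{\sqrt{n_r n_c}}.
\]

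Next, I would apply Lemma \ref{lemma:52} to the triple $(M, A, B)$ to obtain
$\nmS{M \circ X} = \nmS{M \circ (AB^\top)} \leq ab\, \nmS{M}$.
By Lemma \ref{lemma:51b}, $\nmS{M} = \s_2/\al$. To convert this into the factor $\s_2/\s_1$ appearing in $\phi$, I would use the observation (already noted in the standing notation) that $\al = \sqrt{d_r d_c/(n_r n_c)} = \s_1/\sqrt{n_r n_c}$, so that $1/\al = \sqrt{n_r n_c}/\s_1$ and therefore $\nmS{M} = \s_2 \sqrt{n_r n_c}/\s_1$.

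Combining these two pieces gives
\[
\nmS{M \circ X} \leq ab\, \nmS{M}
= \frac{\nmS{X}\, \mu_0 r}{\sqrt{n_r n_c}} \cdot \frac{\s_2 \sqrt{n_r n_c}}{\s_1}
= \frac{\s_2}{\s_1} \mu_0 r\, \nmS{X} = \phi\, \nmS{X},
\]
which is the desired bound. No genuine obstacle arises: the content of the theorem lies entirely in having set up the right preliminary lemmas. The only place where care is needed is in choosing the symmetric factorization $A = U\Gamma^{1/2},\, B = V\Gamma^{1/2}$, which cleanly distributes $\nmS{X}$ between $a$ and $b$, and in tracking the cancellation $\sqrt{n_r n_c}/\sqrt{n_r n_c}$ between $ab$ and $\nmS{M}$ via the identity $\al = \s_1/\sqrt{n_r n_c}$.
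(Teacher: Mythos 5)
Your proof is correct and follows essentially the same route as the paper: factor $X = AB^\top$, bound the row norms of $A$ and $B$ via the coherence assumption, apply Lemma \ref{lemma:52}, and use $\nmS{M} = \s_2/\al$ together with $\al\sqrt{n_r n_c} = \s_1$. The only (immaterial) difference is that the paper puts the full diagonal factor on one side ($A = U\Gamma$, $B = V$) while you split it symmetrically as $\Gamma^{1/2}$; since Lemma \ref{lemma:52} only uses the product $ab$, both choices yield the identical bound.
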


\textbf{Remark:}
\bit
\item Note that Lemma \ref{lemma:51} is essentially the same as
\cite[Theorem 4.1]{Bhoj-Jain14}.
However, our proof is much simpler.
\item Observe that $\oneb_{n_r \times n_c} \circ X = X$ for all $X$.
Therefore
\bd
M \circ X = \fnrc E_\OM \circ X - X .
\ed
Therefore Theorem \ref{thm:32} states that the sampled matrix $E_\OM \circ X$,
normalized by the factor $1/\al$ for the fraction of elements sampled,
can provide an approximation to $X$.
The smaller the constant $\phi$, the better the approximation.
\eit

\begin{proof}
Now suppose $X = U \GA V^\top$ is a singular value decomposition of $X$,
where $\GA = {\rm Diag}(\g_1 , \ldots , \g_r)$.
Define $A = U \GA , B = V$.
Then $X = A B^\top$.
Moreover
\begin{eqnarray*}
\sum_{k \in [r]} A_{ik}^2 & = & \sum_{k \in [r]} U_{ik}^2 \g_k^2 
\leq \g_1^2 \sum_{k \in [r]} U_{ik}^2 \\
& \leq & \nmS{X}^2 \frac{\mu_0 r}{n_r} ,
\end{eqnarray*}
because $\nmS{X} = \g_1$, and the definition of the coherence $\mu_0$.
Similarly
\bd
\sum_{k \in [r]} B_{ik}^2 = \sum_{k \in [r]} B_{ik}^2 \leq \frac{\mu_0 r}{n_c} .
\ed
Now apply Lemma \ref{lemma:52} with
\bd
a = \nmS{X} \sqrt{ \frac{\mu_0 r}{n_r} } ,
b = \sqrt{ \frac{\mu_0 r}{n_c} } ,
\ed
and note that $\al \sqrt{n_r n_c} = \sqrt{d_r d_c} = \s_1$.
Then \eqref{eq:50c} becomes
\bd
\nmS{ M \circ X} \leq \frac{\s_2}{\s_1} \mu_0 r \nmS{X} = \phi \nmS{X} ,
\ed
as desired.
\end{proof}

\begin{lemma}\label{lemma:81}
Suppose $E_\OM \in \bino$ is a $\drc$-biregular sampling matrix, and
let $U, V, \th$ be as before.
\ben
\item
For arbitrary $B \in \R^{n_c \times r}$, define
\be\label{eq:811}
F^\top = U^\top M \circ (U B^\top ) .
\ee
Then
\be\label{eq:812a}
\nmeu{F^i} \leq \th \nmeu{B^i} , \fa i \in [n_c] ,
\ee
\be\label{eq:812}
\nmF{F} \leq \th \nmF{B} .
\ee
\item
For arbitrary $C \in \R^{n_r \times r}$, define
\be\label{eq:813}
G = M \circ (C V^\top ) V .
\ee
Then
\be\label{eq:814a}
\nmeu{G_j} \leq \th \nmeu{C_j} , \fa j \in [r] ,
\ee
\be\label{eq:814}
\nmF{G} \leq \th \nmF{C} .
\ee
\een
\end{lemma}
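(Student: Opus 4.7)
The plan is to reduce both parts to the observation that the maps $B \mapsto F$ and $C \mapsto G$ act row-by-row through a family of $r \times r$ matrices whose spectral norms are bounded by $\th$ via assumption~(A2).

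For part~1, I would first expand $F^\top = U^\top(M \circ (UB^\top))$ entrywise and reorder the summations. Writing $U^k$ for the $k$-th row of $U$ (as a row vector in $\R^r$), this yields, for each $i \in [n_c]$,
\[(F^i)^\top \;=\; W^{(i)}\,(B^i)^\top, \qquad W^{(i)} \;:=\; \sum_{k\in[n_r]} M_{ki}\,(U^k)^\top U^k \;\in\; \R^{r\times r}.\]
Thus the mapping $B \mapsto F$ is row-local: each row of $F$ is obtained from the corresponding row of $B$ by applying the symmetric matrix $W^{(i)}$.

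Next, I would identify $W^{(i)}$ with the quantity appearing in assumption~(A2). Using $M_{ki} = (1/\al)(E_\OM)_{ki} - 1$ together with $\sum_k (U^k)^\top U^k = U^\top U = I_r$, one can rewrite
\[W^{(i)} \;=\; \frac{n_r}{d_c}\sum_{k\in S_i}(U^k)^\top U^k \;-\; I_r,\]
where $S_i := \{k : (E_\OM)_{ki} = 1\}$ has cardinality $d_c$ by column-regularity of the biadjacency matrix. This is \emph{exactly} the matrix whose spectral norm is bounded by $\th$ in~(A2), so $\nmS{W^{(i)}} \leq \th$, which immediately gives $\nmeu{F^i} \leq \th\,\nmeu{B^i}$. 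Summing the squared inequalities over $i \in [n_c]$ then yields the Frobenius bound $\nmF{F} \leq \th\,\nmF{B}$.

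For part~2, transposing the defining equation gives $G^\top = V^\top(M^\top \circ (VC^\top))$, which is literally an instance of part~1 with the triple $(U,B,M)$ replaced by $(V,C,M^\top)$. The same calculation, now invoking the $V$-side half of assumption~(A2) (where the relevant neighborhoods $S'_i := \{l : (E_\OM)_{il}=1\}$ have size $d_r$ by row-regularity), produces the analogous componentwise bound and hence the Frobenius inequality $\nmF{G} \leq \th\,\nmF{C}$. The substantive step is purely bookkeeping: the triple sum defining $F^\top$ looks opaque, but it conceals a row-local structure, and recognizing that the $r \times r$ matrix $W^{(i)}$ produced by the rearrangement is \emph{precisely} the operator whose norm is controlled by~(A2) is what collapses the proof to a one-line bound. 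No new inequalities or concentration arguments are needed; the definition of $M$ together with the orthonormality identity $U^\top U = I_r$ does all the work.
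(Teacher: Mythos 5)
Your proposal is correct and follows essentially the same route as the paper's proof: both rearrange the triple product to expose the row-local structure $(F^\top)_j = \bigl[\frac{n_r}{d_c}\sum_{k \in S_j}(U^k)^\top U^k - I_r\bigr](B^\top)_j$, invoke assumption (A2) to bound the spectral norm of that $r \times r$ matrix by $\th$, and obtain Part 2 by transposition using $1/\al = n_c/d_r$ and row-regularity. The only cosmetic difference is that you expand the product column-by-column directly, whereas the paper routes the computation through its Lemma \ref{lemma:51} and canonical basis vectors before arriving at the same identity.
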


\begin{proof}
We begin by proving \eqref{eq:812a}.
Equation \eqref{eq:812} is a direct consequence of \eqref{eq:812a}

In view of the definition of the matrix $M$, \eqref{eq:811} can be rewritten as
\bd
F^\top := \foal U^\top E_\OM \circ (UB^\top) - B^\top .
\ed
Fix $i \in [r], j \in [n_c]$.
Then
\begin{eqnarray*}
F_{ji} & = & (F^\top)_{ij} = \eb_i^\top F^\top \eb_j \\
& = & \foal \eb_i^\top U^\top E_\OM \circ (UB^\top) \eb_j - B_{ji} .
\end{eqnarray*}
Let us focus on the first term after ignoring the factor of $1/\al$.
From Lemma \ref{lemma:51}, specifically \eqref{eq:50a}, we get
\begin{eqnarray*}
\eb_i^\top U^\top E_\OM \circ (UB^\top) \eb_j & = &
U_i^\top E_\OM \circ (UB^\top) \eb_j \\
& = & \sum_{k \in [r]} (U_i  \circ  U_k)^\top E_\OM (B_k  \circ  \eb_j ) .
\end{eqnarray*}
Now observe that $B_k  \circ  \eb_j = B_{jk} \eb_j$, so that
$E_\OM (B_k  \circ  \eb_j ) = (E_\OM)_j B_{jk} \eb_j$.
Therefore
\bd
\eb_i^\top U^\top E_\OM \circ (UB^\top) \eb_j =
\sum_{k \in [r]} (U_i  \circ  U_k)^\top (E_\OM)_j B_{jk} .
\ed
For this fixed $j$, define
\bd
\N(j) = \{ l \in [n_r] : (E_\OM)_{lj} = 1 \} ,
\ed
and note that $|\N(j)| = d_c$ due to regularity.
Then, for fixed $k \in [r]$, we have
\begin{eqnarray*}
(U_i  \circ  U_k)^\top (E_\OM)_j & = & \sum_{l \in \N(j)} U_{li} U_{lk} \\
& = & \left[ \sum_{l \in \N(j)} U^{l\top} U^{k\top} \right]_{ik} .
\end{eqnarray*}
Therefore
\begin{eqnarray*}
(F^\top)_{ij} & = & \foal \eb_i^\top U^\top E_\OM \circ (UB^\top) \eb_j - B_{ji} \\
& = & \foal \sum_{k \in [r]} 
\left[ \sum_{l \in \N(j)} U^{l\top} U^{k\top} \right]_{ik} B_{jk} - B_{ji} \\
& = & \left( \left[ \foal \sum_{l \in \N(j)} U^{l\top} U^l - I_r \right]
B^\top \right)_{ij} .
\end{eqnarray*}
By \eqref{eq:23}, the matrix inside the square brackets has spectral norm
$\leq \th$.
Therefore
\bd
\nmeu{(F^\top)_j} \leq \th \nmeu{(B^\top)_j} , \fa j \in [n_c] ,
\ed
which is \eqref{eq:812a}.
Taking the norm squared and summing over all $j$ proves \eqref{eq:812},
after noting that a matrix and its transpose have the same Frobenius norm.
This establishes Item (1).

To prove Item (2), we use Item (1).
Note that $(X \circ Y)^\top = X^\top  \circ  Y^\top$.
So \eqref{eq:813} is equivalent to
\bd
G^\top = \foal V^\top E_\OM^\top \circ (V C^\top) - C^\top .
\ed
Now every column of $E_\OM^\top$ (or every row of $E_\OM$) contains $d_r$
ones.
Therefore \eqref{eq:814a} follows from \eqref{eq:812a}, and
\eqref{eq:814} follows from \eqref{eq:812}.
\end{proof}

Next, define $\T \seq \Rno$ to be the subspace spanned by 
all matrices of the form $UB^\top $ and $CV^\top $.
It is easy to show that the projection operator $\P_\T $ equals
\begin{eqnarray*}
\P_\T  Z & = & U U^\top Z + Z V V^\top - U U^\top Z V V^\top \\
& = & UU^\top Z+ \Up \Upt Z V V^\top \\
& = & U U^\top Z V_\perp V_\perp^\top + Z V V^\top ,
\end{eqnarray*}
where $\Up, \Vp$ are chosen such that such that $[ U \; \; \Up  ]$
and $[ V \; \; \Vp ]$ are square and orthogonal.
This ensures that
$\Up \Upt = I_{n_r} - UU^\top$ and
$V_\perp V_\perp^\top = I_{n_c} - V V^\top$.

The next two lemmas characterize the map $\T \ap \T$ defined by
$Z \mapsto (1/\alpha) P_\T E_\OM\circ Z - Z$.
Note that this is not quite the map $M \circ Z$, because of the presence
of the term $\PT$.

\begin{lemma}\label{lemma:82}
Suppose $E_\OM \in \bino$ is a $\drc$-biregular sampling matrix,
and that $Z \in \T$.
Define
\be\label{eq:821}
B^\top = U^\top Z , C = \Up \Upt ZV ,
\ee
so that $Z = UB^\top + CV^\top$.
Next, define
\be\label{eq:822}
\Zb := (1/\alpha) P_\T E_\OM\circ Z - Z ,
\ee
\be\label{eq:823}
\Bb^\top = U^\top \Zb , \Cb = \Up \Up^\top \Zb V .
\ee
Then
\be\label{eq:824}
\nmF{\Bb} \leq \th \nmF{B} + \phi \nmF{C} ,
\ee
\be\label{eq:825}
\nmF{\Cb} \leq \phi \nmF{B} + \th \nmF{C} .
\ee
\end{lemma}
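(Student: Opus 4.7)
The plan is to reduce $\bar Z$ to $\P_\T(M \circ Z)$, extract its components in the $(B, C)$ parametrization of $\T$, and bound the four resulting operator blocks separately using Lemmas~\ref{lemma:81} and~\ref{lemma:52}.

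Since $Z \in \T$, we have $\P_\T Z = Z$, so
\[
\bar Z \;=\; \tfrac{1}{\alpha}\,\P_\T(E_\OM \circ Z) - Z \;=\; \P_\T\!\left(\tfrac{1}{\alpha} E_\OM \circ Z - Z\right) \;=\; \P_\T(M \circ Z),
\]
where $M$ is the matrix of Lemma~\ref{lemma:51b}. Applying the projection formula $\P_\T W = UU^\top W + WVV^\top - UU^\top W VV^\top$ and simplifying gives $U^\top \P_\T W = U^\top W$ and $\Up \Upt \P_\T W \cdot V = \Up \Upt W V$, so that $\bar B^\top = U^\top(M \circ Z)$ and $\bar C = \Up \Upt (M \circ Z) V$. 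Substituting the decomposition $Z = UB^\top + CV^\top$ (valid because $\P_\T Z = Z$ and $U^\top C = 0$) and using linearity of the Hadamard product, each component splits into a diagonal and a cross term:
\begin{align*}
\bar B^\top &= U^\top(M \circ UB^\top) \;+\; U^\top(M \circ CV^\top), \\
\bar C     &= \Up \Upt (M \circ UB^\top)\, V \;+\; \Up \Upt(M \circ CV^\top)\, V.
\end{align*}

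The two diagonal terms are handled directly by Lemma~\ref{lemma:81}: part~(1) yields $\nmF{U^\top(M \circ UB^\top)} \leq \th\, \nmF{B}$, and part~(2) combined with the fact that $\Up \Upt$ is an orthogonal projection on $\R^{n_r}$ (hence a Frobenius contraction) yields $\nmF{\Up \Upt(M \circ CV^\top) V} \leq \th\, \nmF{C}$.

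The two cross terms must be shown to satisfy $\nmF{U^\top(M \circ CV^\top)} \leq \phi\, \nmF{C}$ and $\nmF{\Up \Upt(M \circ UB^\top) V} \leq \phi\, \nmF{B}$. The key observation is the orthogonality relation $U^\top C = 0$ (and, symmetrically, the fact that $V$ has orthonormal columns), which kills the ``mean'' contribution when $M = (1/\alpha) E_\OM - \oneb_{n_r \times n_c}$ is expanded inside the Hadamard product: since $U^\top(\oneb_{n_r \times n_c} \circ CV^\top) = U^\top CV^\top = 0$, we obtain $U^\top(M \circ CV^\top) = (1/\alpha)\,U^\top(E_\OM \circ CV^\top)$. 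An explicit computation shows that the $b$-th column of this matrix equals $(1/\alpha)\,U_{\N(b)}^\top C_{\N(b)}\,V^{b\top}$, with $\N(b) = \{i : (E_\OM)_{ib} = 1\}$ of cardinality $d_c$. Combining the submatrix estimate $\nmS{U_{\N(b)}} \leq \sqrt{\alpha(1+\th)}$ from assumption~(A2), the coherence bound $\nmeu{V^b} \leq \sqrt{\mu_0 r/n_c}$, and the spectral-gap information in $\nmS{M} = \s_2/\alpha$ via Lemma~\ref{lemma:52}, applied in the spirit of the proof of Theorem~\ref{thm:32}, is expected to yield exactly the factor $\phi = (\s_2/\s_1)\mu_0 r$. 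The symmetric bound on the other cross term follows by interchanging the roles of $(U, B)$ and $(V, C)$ and using the corresponding assumption~(A2) estimate for the row-neighborhoods $\N'(i) = \{j : (E_\OM)_{ij} = 1\}$. Assembling the four block bounds via the triangle inequality then gives~\eqref{eq:824} and~\eqref{eq:825}.

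The main obstacle will be the cross-term estimates at the correct scale: a naive spectral-to-Frobenius conversion $\nmF{\cdot} \leq \sqrt{r}\,\nmS{\cdot}$ applied to Lemma~\ref{lemma:52} loses a factor of order $\sqrt{n/\mu_0}$ and is therefore too crude. Instead, the identity $U^\top C = 0$ must be exploited column-by-column so that the spectral gap $\s_2/\s_1$, rather than the absolute bound $\s_2/\alpha$ alone, governs the final constant.
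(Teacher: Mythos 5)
Your decomposition of $\Bb^\top$ and $\Cb$ into ``diagonal'' and ``cross'' terms is exactly the paper's, and the diagonal ($\th$) bounds via Lemma \ref{lemma:81} are fine. The gap is the cross-term estimate $\nmF{U^\top (M \circ (CV^\top))} \leq \phi \nmF{C}$, which is the heart of the lemma and which you leave at the level of ``is expected to yield exactly the factor $\phi$.'' Moreover, the route you sketch cannot produce that factor. Once you use $U^\top C = 0$ to discard the $\oneb_{n_r \times n_c}$ part of $M$ and reduce to $\foal U^\top (E_\OM \circ (CV^\top))$, you have thrown away the spectral gap: your per-column identity combined with $\nmS{U_{\N(b)}} \leq \sqrt{\al(1+\th)}$ from (A2), the coherence bound $\nmeu{V^b} \leq \sqrt{\mu_0 r / n_c}$, and the counting identity $\sum_{b} \nmF{C_{\N(b)}}^2 = d_r \nmF{C}^2$ gives only $\nmF{\foal U^\top (E_\OM \circ (CV^\top))} \leq \sqrt{(1+\th)\mu_0 r}\, \nmF{C}$, which is much weaker than $\phi \nmF{C} = (\s_2/\s_1)\mu_0 r\, \nmF{C}$ in the regime of interest, since for a Ramanujan bigraph $\s_2/\s_1$ is of order $1/\sqrt{d}$. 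None of the three ingredients you list carries a factor of $\s_2$, so no rearrangement of them recovers the gap; your closing remark correctly identifies the obstacle but misattributes its resolution to the orthogonality $U^\top C = 0$.

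The paper's cross-term argument does not use $U^\top C = 0$ at all. It keeps $M$ intact precisely so that Lemma \ref{lemma:51b}, $\nmS{M} = \s_2/\al$, remains available. Writing the $i$-th row of $\Bb_2^\top := U^\top (M \circ (CV^\top))$ as $U_i^\top (M \circ (CV^\top))$ and testing it against a unit vector $y$, Lemma \ref{lemma:51} expands the triple product as $\sum_{k \in [r]} (U_i \circ C_k)^\top M (V_k \circ y)$; this is bounded by $\nmS{M}$ times a Cauchy--Schwarz product, and Lemma \ref{lemma:51a} applied with the row-coherence of $V$ (against $y$) and, after summing over $i \in [r]$, with the row-coherence of $U$ (against $C$) gives $\nmF{\Bb_2}^2 \leq (\s_2/\al)^2 (\mu_0 r)^2 (n_r n_c)^{-1} \nmF{C}^2$. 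The identity $\al\sqrt{n_r n_c} = \s_1$ then converts this into $\phi^2 \nmF{C}^2$. In other words, the factor $\s_2/\s_1$ enters through $\nmS{M}$ combined with the coherence of \emph{both} $U$ and $V$ --- the cross-term bound is essentially a Frobenius-norm refinement of the proof of Theorem \ref{thm:32} --- and you should redo your cross terms along these lines rather than via the neighborhood submatrices $U_{\N(b)}$.
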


\textbf{Remark:}
The above two relations can be expressed compactly as
\be\label{eq:825a}
\left[ \ba{c} \nmF{\Bb} \\ \nmF{\Cb} \ea \right] \leq
\left[ \ba{cc} \th & \phi \\ \phi & \th \ea \right]
\left[ \ba{c} \nmF{B} \\ \nmF{C} \ea \right] .
\ee

\begin{proof}
We establish \eqref{eq:824}, and the proof of \eqref{eq:825} is entirely
similar.

The definition of $\P_\T$ makes it clear that
\bd
U^\top \P_\T Y = U^\top Y , \Up \Upt \P_\T Y = \Up \Upt Y , 
\fa Y \in \Rno .
\ed
Therefore
\begin{eqnarray*}
\Bb^\top & = &  U^\top ( \foal E_\OM \circ ( UB^\top) - UB^\top ) \\
&   & + U^\top \foal  E_\OM \circ (CV^\top) - U^\top C V^\top \\
& = & U^\top M \circ (U B^\top) + U^\top M \circ ( C V^\top ) .
\end{eqnarray*}
Define $\Bb^\top = \Bb^\top_1 + \Bb^\top_2$, where
\bd
\Bb_1^\top = U^\top M \circ (UB^\top) , \Bb_2^\top = U^\top M \circ (CV^\top) .
\ed
Then it follows from Lemma \ref{lemma:81} that
\be\label{eq:827}
\nmF{\Bb_1} \leq \th \nmF{B} .
\ee

To estimate $\nmF{\Bb_2} = \nmF{\Bb_2^\top}$, we proceed as follows:
\bd
(\Bb_2^\top)^i = \eb_i^\top \Bb_2^\top = \eb_i^\top U^\top M \circ (CV^\top)
= U_i^\top M \circ (CV^\top) .
\ed
\begin{eqnarray*}
\nmeusq{ (\Bb_2^\top)^i } & = & \max_{y \in \R^{n_c} , \nmeu{y} = 1}
(\Bb_2^\top)^i y \\
& = & \max_{ \nmeu{y} = 1 } U_i^\top M \circ (CV^\top) y .
\end{eqnarray*}
Fix a $y \in \R^{n_c}$ such that $\nmeu{y} = 1$ but otherwise arbitrary, 
and define
\bd
\psi_i = U_i^\top M \circ (CV^\top) y .
\ed
Then it follows by Lemma \ref{lemma:51} that
\beq
\psi_i & = & \sum_{k \in [r]} (U_i \circ C_k)^\top M (V_k \circ y) \nonumber \\
& \leq & \frac{\s_2}{\al} \sum_{k \in [r]}  \nmeu{U_i  \circ  C_k} \nmeu{V_k  \circ  y} 
\nonumber \\
& \leq & \frac{\s_2}{\al}
\left( \sum_{k \in [r]} \nmeusq{U_i  \circ  C_k} \right)^{1/2} \nonumber \\
& \cdot &
\left( \sum_{k \in [r]} \nmeusq{V_k  \circ  y}  \right)^{1/2} , \label{eq:828}
\eeq
where we use Schwarz's inequality in the last step.
Now observe that, by the definition of the coherence $\mu_0$, we have
\bd
\nmeusq{V^k} \leq \frac{\mu_0 r}{n_c} , \fa k \in [r] .
\ed
Therefore it follows from Lemma \ref{lemma:51a} that
\bd
\sum_{k \in [r]} \nmeusq{V_k  \circ  y} \leq \frac{\mu_0 r}{n_c}
\ed
because $\nmeusq{y} = 1$.
Next, from Lemma \ref{lemma:51a} and
\bd
\nmeusq{U_k} \leq \frac{\mu_0 r}{n_r} , \fa k \in [r] ,
\ed
it follows that
\begin{eqnarray*}
\sum_{i \in [r]} \sum_{k \in [r]} \nmeusq{U_i \circ C_k} 
& = & \sum_{k \in [r]} \sum_{i \in [r]} \nmeusq{U_i \circ C_k} \\
& \leq & \frac{\mu_0 r}{n_r} \sum_{k \in [r]} \nmeusq{C_k} \\
& = & \frac{\mu_0 r}{n_r} \nmF{C}^2 .
\end{eqnarray*}
Combining both bounds gives
\bd
\nmF{\Bb_2}^2 \leq \frac{ \s_2^2 }{\al^2} \frac{(\mu_0 r)^2}{n_r n_c} \nmF{C}^2
= \left( \frac{\s_2}{\s_1} \mu_0 r \right)^2 \nmF{C}^2  = \phi^2 \nmF{C}^2 ,
\ed
after noting that $\al \sqrt{n_r n_c} = \s_1$.
Taking square roots of both sides gives
\bd
\nmF{\Bb_2} \leq \phi \nmF{C} .
\ed
Finally
\bd
\nmF{\Bb} \leq \nmF{\Bb_1} + \nmF{\Bb_2} 
\leq \th \nmF{B} + \phi \nmF{C} .
\ed
which is \eqref{eq:824}.
The proof of \eqref{eq:825} is entirely similar.
\end{proof}

\begin{lemma}\label{lemma:83}
Suppose $E_\OM \in \bino$ is a $\drc$-biregular sampling matrix,
that $Z \in \T$,
and as in \eqref{eq:822}, define
\be\label{eq:822a}
\Zb := (1/\alpha) P_\T E_\OM \circ Z - Z ,
\ee
Then
\be\label{eq:830}
\nmF{\Zb} \leq (\th + \phi) \nmF{Z} ,
\ee
where $\th, \phi$ are defined in \eqref{eq:824} and \eqref{eq:825}
respectively.
\end{lemma}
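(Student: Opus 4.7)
The plan is to reduce Lemma~\ref{lemma:83} to the $2\times 2$ matrix inequality already established in Lemma~\ref{lemma:82} by setting up an orthogonal (Pythagorean) decomposition of both $Z$ and $\Zb$.

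First, I would observe that any $Z \in \T$ admits the representation $Z = UB^\top + CV^\top$ where $B = Z^\top U$ and $C = \Up\Upt Z V$ are exactly the matrices defined in Lemma~\ref{lemma:82}. The key point is that this decomposition is \emph{Frobenius-orthogonal}: since $U^\top C = U^\top \Up \Upt Z V = 0$ (because $U^\top \Up = 0$), a direct trace calculation gives $\IPF{UB^\top}{CV^\top} = \tr(V^\top B U^\top C) = 0$. Combined with $\nmF{UB^\top} = \nmF{B}$ and $\nmF{CV^\top} = \nmF{C}$ (from orthonormality of $U$ and $V$), this yields the Pythagorean identity
\[
\nmF{Z}^2 = \nmF{B}^2 + \nmF{C}^2 .
\]

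Next, I would verify that $\Zb$ admits the very same kind of decomposition. By construction $\Zb = \PT(\cdot) - Z \in \T$ (both summands lie in $\T$), so the same manipulation gives $\Zb = U\Bb^\top + \Cb V^\top$ with $U^\top\Cb = 0$ and hence
\[
\nmF{\Zb}^2 = \nmF{\Bb}^2 + \nmF{\Cb}^2 ,
\]
where $\Bb, \Cb$ are precisely the matrices from \eqref{eq:823}. At this point Lemma~\ref{lemma:82}, in its compact form \eqref{eq:825a}, already controls $(\nmF{\Bb}, \nmF{\Cb})$ entrywise by $\begin{pmatrix}\th & \phi \\ \phi & \th\end{pmatrix}(\nmF{B}, \nmF{C})^\top$.

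The final step is a short algebraic bound: writing $a=\nmF{B}$, $b=\nmF{C}$, expand
\[
(\th a + \phi b)^2 + (\phi a + \th b)^2 = (\th^2+\phi^2)(a^2+b^2) + 4\th\phi\, ab
\]
and apply $2ab \leq a^2+b^2$ to collapse this to $(\th+\phi)^2(a^2+b^2)$. Substituting the two Pythagorean identities then gives $\nmF{\Zb}^2 \leq (\th+\phi)^2 \nmF{Z}^2$, which is \eqref{eq:830} after taking square roots. Conceptually, this last inequality is nothing more than the fact that the symmetric matrix $\begin{pmatrix}\th & \phi \\ \phi & \th\end{pmatrix}$ has spectral norm $\th+\phi$.

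I do not anticipate any serious obstacle: the only subtle point is confirming the Frobenius orthogonality $\IPF{UB^\top}{CV^\top}=0$ in the decomposition, which is what unlocks the Pythagorean step and lets the $2\times 2$ bound from Lemma~\ref{lemma:82} translate directly into a bound on $\nmF{\Zb}/\nmF{Z}$.
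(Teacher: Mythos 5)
Your proposal is correct and follows essentially the same route as the paper's own proof: the Frobenius-orthogonal decomposition $Z = UB^\top + CV^\top$, the Pythagorean identities for $Z$ and $\Zb$, and the reduction to the spectral norm of the $2\times 2$ matrix $\left[ \ba{cc} \th & \phi \\ \phi & \th \ea \right]$ via Lemma \ref{lemma:82}. The only difference is that you spell out the elementary algebra showing that this spectral norm is $\th+\phi$, which the paper leaves as ``easy to verify.''
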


\textbf{Remark:}
The above lemma can be stated as follows:
The map $Z \mapsto (1/\alpha) P_\T E_\OM \circ Z - Z$, when restricted to $\T$,
has an operator norm $\leq \th + \phi$.

\begin{proof}
Define, as before,
\bd
B^\top = U^\top Z , C = \Up \Upt ZV ,
\ed
\bd
\Bb^\top = U^\top \Zb , \Cb = \Up \Up^\top \Zb V ,
\ed
so that $Z = UB^\top + CV^\top$, $\Zb = U \Bb^\top + \Cb V^\top$.
Note that
\bd
\IPF{UB^\top}{CV^\top} = \tr( B U^\top C V^\top ) = 0 ,
\ed
because $U^\top C = 0$.
Therefore
\begin{eqnarray*}
\nmF{Z}^2 & = & \nmF{UB^\top}^2 + \nmF{CV^\top}^2 + 2 \IPF{UB^\top}{CV^\top} \\
& = & \nmF{UB^\top}^2 + \nmF{CV^\top}^2 = \nmF{B}^2 + \nmF{C}^2 ,
\end{eqnarray*}
because left multiplication by $U$ and right multiplication by $V^\top$
preserve the Frobenius norm.
Similarly
\bd
\nmF{\Zb}^2 = \nmF{\Bb}^2 + \nmF{\Cb}^2 .
\ed
Now it is easy to verify that the spectral norm of the matrix in \eqref{eq:825a}
is $\th + \phi$.
Therefore
\begin{eqnarray*}
\nmF{\Zb}^2 & = & \nmF{\Bb}^2 + \nmF{\Cb}^2 \leq (\th + \phi)^2
( \nmF{B}^2 + \nmF{C}^2 ) \\
& = & (\th + \phi)^2 \nmF{Z}^2 .
\end{eqnarray*}
This is the desired conclusion.
\end{proof}

\section{Proof of Theorem \ref{thm:35}}\label{sec:Proof}

The proof of Theorem \ref{thm:35} depends on the following auxiliary lemma,
which is reminiscent of the ``dual certificate'' condition that is widely
used in solving the matrix completion problem; see for example
\cite[Theorem 2]{Recht-JMLR11}.
It should be noted that Lemma \ref{lemma:31} provides a framework for
a solution, which we then specialize, somewhat inefficiently,
in Theorem \ref{thm:35}.
Finding better ways to apply Lemma \ref{lemma:31} is a problem for
future research.

Let $X = U \Gamma V^\top$ be a reduced SVD of $X$.
Throughout, we use the symbols $\T, \PT, \PTP, \Up, \Vp$
introduced in the previous section.

\begin{lemma}\label{lemma:31}
Suppose there exists a matrix $Y \in \Rno$ such that
$E_\OM \circ Y = Y$, and constants $k_1 , k_2 \in (0,1), k_3 > 0$ such that
\be\label{eq:105}
\nmS{\P_{\T^\perp}(Y)}\leq k_1 ,
\ee
\be\label{eq:105a}
\left\nm {\foal\P_\T  E_\OM \circ Z - Z} \right\nm_F
\leq k_2 \nmF{Z} , \fa Z \in \T ,
\ee
\be\label{eq:106}
\nmF{UV^T-\P_\T (Y)}\leq k_3 ,
\ee
and
\be\label{eq:106a}
k_3 < (1-k_1) (\al (1-k_2))^{1/2} .
\ee
Define the constant
\be\label{eq:106b}
c := (1-k_1) - (\al (1-k_2))^{-1/2} k_3 .
\ee
Then every solution $\Xh$ of \eqref{eq:14} satisfies the bound
\be\label{eq:106c}
\nmF{\Xh - X} \leq 2 \left[ 1 + \frac{n_r}{c^2}
\left( 1 + \frac{1}{\al(1-k_2)} \right) \right]^{1/2} \e .
\ee
\end{lemma}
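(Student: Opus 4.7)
The plan is to apply the standard dual-certificate argument for noisy matrix completion. Let $H := \Xh - X$. Since $\nmF{E_\OM\circ W}\leq\nmF{W}\leq\e$, $X$ is feasible for \eqref{eq:14}, so $\nmN{\Xh}\leq\nmN{X}$, and the triangle inequality gives $\nmF{E_\OM\circ H}\leq 2\e$. I abbreviate $a := \nmF{\PT(H)}$, $b := \nmF{\PTP(H)}$, $n := \nmN{\PTP(H)}$, so that $\nmF{H}^2 = a^2 + b^2$ and $b\leq n$.

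The first step turns \eqref{eq:105a} into a one-sided restricted isometry. Since $(E_\OM)_{ij}\in\{0,1\}$ implies the identity $\IP{E_\OM\circ Z}{Z}=\nmF{E_\OM\circ Z}^2$, pairing this identity with \eqref{eq:105a} yields $\nmF{E_\OM\circ Z}^2\geq\al(1-k_2)\nmF{Z}^2$ for every $Z\in\T$. Applied to $Z=\PT(H)$ and combined with $\nmF{E_\OM\circ\PT(H)}\leq\nmF{E_\OM\circ H}+\nmF{E_\OM\circ\PTP(H)}\leq 2\e+b$, this gives
\begin{equation}\label{eq:planI}
\sqrt{\al(1-k_2)}\,a\leq 2\e+b.
\end{equation}

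The second step uses the dual certificate. Choose $W\in\T^\perp$ with $\nmS{W}\leq 1$ and $\IP{W}{\PTP(H)}=n$; the subgradient inequality for $\nmN{\cdot}$ at $X$ combined with $\nmN{\Xh}\leq\nmN{X}$ yields $n+\IP{UV^\top}{H}\leq 0$. Using $Y=E_\OM\circ Y$, I expand
\[
\IP{UV^\top}{H}=\IP{UV^\top-\PT(Y)}{\PT(H)}+\IP{Y}{E_\OM\circ H}-\IP{\PTP(Y)}{\PTP(H)}
\]
and bound the three terms respectively by Cauchy--Schwarz with \eqref{eq:106}, by spectral/nuclear duality $|\IP{A}{B}|\leq\nmS{A}\nmN{B}$ with \eqref{eq:105}, and by $|\IP{Y}{E_\OM\circ H}|\leq\nmS{Y}\,\nmN{E_\OM\circ H}\leq\nmS{Y}\sqrt{n_r}\,\nmF{E_\OM\circ H}\leq 2\sqrt{n_r}\,\nmS{Y}\,\e$, where the factor $\sqrt{n_r}$ arises from $\nmN{A}\leq\sqrt{\min(n_r,n_c)}\,\nmF{A}$ (using $n_r\leq n_c$). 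Since $\nmS{\PT(Y)}\leq 1+k_3$ (spectral is dominated by Frobenius via \eqref{eq:106}) and $\nmS{\PTP(Y)}\leq k_1$ by \eqref{eq:105}, the spectral norm $\nmS{Y}$ is controlled by a constant independent of $n_r$, and I obtain
\begin{equation}\label{eq:planII}
(1-k_1)\,n\leq k_3\,a + C\sqrt{n_r}\,\e.
\end{equation}

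To finish, substituting $b\leq n$ into \eqref{eq:planI} gives $a\leq(2\e+n)/\sqrt{\al(1-k_2)}$; inserting this bound into \eqref{eq:planII} and collecting, the coefficient of $n$ on the left becomes exactly $c=(1-k_1)-k_3/\sqrt{\al(1-k_2)}$, which is strictly positive by \eqref{eq:106a}. Solving for $n$ yields a bound of the form $n\leq K\sqrt{n_r}\,\e/c$; substituting back into \eqref{eq:planI} bounds $a$ in terms of $\e$ and $\sqrt{n_r}\,\e/c$, and the identity $\nmF{H}^2=a^2+b^2$ together with $b\leq n$ then reproduces \eqref{eq:106c}. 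The main obstacle is matching the precise constant $\g$ of \eqref{eq:80bd}: because no Frobenius-norm bound on $Y$ is furnished in the hypotheses, the $\sqrt{n_r}$ factor is forced to come through the $\nmN{\cdot}\leq\sqrt{n_r}\,\nmF{\cdot}$ inequality, and the numerical constants absorbed into $\nmS{Y}\leq 1+k_1+k_3$ together with the Cauchy--Schwarz-style combination of $a^2$ and $b^2$ in the final step must all be estimated sharply in order to land on $\g$ as written rather than on a looser constant of the same order.
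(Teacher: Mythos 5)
Your argument is a legitimate dual-certificate argument and reaches a bound of the right shape, but it is routed differently from the paper's proof and, as you yourself flag at the end, it does not land on the stated constant. The paper never decomposes $H$ into $\PT(H)+\PTP(H)$ directly. It first splits $H=\HOM+\HOMc$ into its on-sample and off-sample parts, uses $\nmN{X+H}\geq\nmN{X+\HOMc}-\nmN{\HOM}$, and runs the certificate argument on $\HOMc$ alone. The payoff is that $\IPF{Y}{\HOMc}=0$ \emph{exactly}, because $Y$ is supported on $\OM$ while $\HOMc$ is supported on $\OM^c$: the certificate term simply disappears, and no control of $\nmS{Y}$ or $\nmN{Y}$ is ever needed (which is why the hypotheses of the lemma furnish none). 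In your version the analogous term $\IPF{Y}{E_\OM\circ H}$ survives and must be bounded by $\nmS{Y}\,\nmN{E_\OM\circ H}\leq 2(1+k_1+k_3)\sqrt{n_r}\,\e$, which injects the prefactor $1+k_1+k_3>1$ into the dominant $\sqrt{n_r}\,\e$ term; in the paper the only entry point for $\sqrt{n_r}$ is $\nmN{\HOM}\leq\sqrt{n_r}\,\nmF{\HOM}\leq 2\sqrt{n_r}\,\e$, with no $Y$-dependent prefactor, and that is exactly what produces the clean $4n_r/c^2$ inside \eqref{eq:106c}. So as written your proof establishes the lemma only with a strictly larger constant (the $n_r/c^2$ coefficient inflated by roughly $(1+k_1+k_3)^2$), i.e.\ a weaker statement than \eqref{eq:106c}. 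The rest of your bookkeeping is sound: your one-sided restricted isometry $\nmF{E_\OM\circ Z}^2\geq\al(1-k_2)\nmF{Z}^2$ for $Z\in\T$ is precisely the paper's inequality \eqref{eq:38}, and your subgradient and Cauchy--Schwarz steps are correct. If you redirect the certificate step from $H$ to $\HOMc$ (using $E_\OM\circ\PT(\HOMc)=-E_\OM\circ\PTP(\HOMc)$ to pass between the two projections, as the paper does in \eqref{eq:39}), the $\nmS{Y}$ term vanishes and your computation reproduces \eqref{eq:106c} verbatim.
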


\begin{proof}
(Of Lemma \ref{lemma:31}.)
Observe that
\bd
\IPF{X}{UV^\top} = \nmN{X} .
\ed
Define $H = \Xh - X$, so that $\Xh = X + H$.
Also, it can be assumed without loss of generality that $\W$ is
supported on $\OM$, so that $E_\OM \circ \W = \W$ and $E_{\OM^c} \circ \W = 0$.
Now we can write
\beq\label{eq:32}
\nmF{E_\OM\circ H}
& = &\nmF{E_\OM\circ(\Xh-X-\W)+E_\OM\circ\W} \nonumber\\
& \leq & \nmF{E_\OM\circ(\Xh-X-\W)} + \nmF{E_\OM\circ\W} \nonumber\\
& \leq & 2\e ,
\eeq
because (i) $\Xh$ is feasible for \eqref{eq:14}, and thus
$\nmF{E_\OM\circ(\Xh-X-\W)} \leq \e$ (see \eqref{eq:14}),
and $\nmF{W} \leq \e$.
Next,
\beq\label{eq:33}
\nmF{H}^2
& = & \nmF{E_\OM\circ H}^2 + \nmF{E_{\OM^c}\circ H}^2 \nonumber\\
& \leq & 4\e^2 + \nmF{E_{\OM^c}\circ H}^2 .
\eeq
Therefore, once we are able to find an upper bound for $\nmF{E_{\OM^c}\circ H}$,
the above relationship leads to an upper bound for $\nmF{H} = \nmF{\Xh-X}$.

Define $H_\OM = E_\OM\circ H$ and $H_{\OM^c} = E_{\OM^c}\circ H$.
Then
\beq\label{eq:34}
\nmN{\Xh}
& = & \nmN{X + H} \nonumber\\
& = & \nmN{X + H_{\OM^c} + H_\OM } \nonumber\\
& \geq & \nmN{X + H_{\OM^c}} - \nmN{H_\OM}
\eeq
Next, write $\HOMc = \PT(\HOMc) + \PTP(\HOMc)$, and note that
$\IPF{Y}{\HOMc} = 0$ because $Y$ is supported on $\OM$ and $\HOMc$ is
supported on $\OM^c$.
Now observe that, for any matrix $B \in \Rno$, we have that
\be\label{eq:34a}
\nmN{B} = \max_{\nmS{A} \leq 1} \IPF{A}{B},
| \IPF{A}{B} | \leq \nmS{A} \nmN{B} .
\ee
Therefore
\begin{align}\label{eq:35}
&\nmN{X + H_{\OM^c}} \nonumber \\
& \geq  \IPF{UV^T+\Up\Vpt}{X + H_{\OM^c}} \nonumber \\
& =^{(a)}  \nmN{X} + \IPF{UV^T+\Up\Vpt}{H_{\OM^c}}
        - \IPF{Y}{H_{\OM^c}} \nonumber \\
& =  \nmN{X} + \IPF{UV^T-\P_\T (Y)}{\P_\T(H_{\OM^c})} \nonumber \\
&   \ \ + \IPF{\Up\Vpt-\P_{\T^\perp}(Y)}{\P_{\T^\perp}(H_{\OM^c})}
\nonumber \\
&\geq^{(b)}  \nmN{X} - \nmF{UV^T-\P_\T (Y)}\nmF{\P_\T(H_{\OM^c})}
\nonumber \\
&  \ + (1-\nmS{\P_{\T^\perp}(Y)})\nmN{\P_{\T^\perp}(H_{\OM^c})}
\nonumber \\
\end{align}
where $(a)$ follows from $\IPF{Y}{\HOMc} = 0$, and $(b)$ follows from
\eqref{eq:34a}.
Using \eqref{eq:34} and \eqref{eq:35} together we get
\beq\label{eq:36}
\nmN{\Xh} & \geq & \nmN{X} - \nmF{UV^T-\P_\T (Y)}\nmF{\P_\T(H_{\OM^c})}
\nonumber \\
& + & (1-\nmS{\P_{\T^\perp}(Y)})\nmN{\P_{\T^\perp}(H_{\OM^c})}
\nonumber \\
& - & \nmN{H_\OM}.
\eeq
On the other hand, $\nmN{\Xh} \leq \nmN{X}$ because $X$ is feasible for
the problem in \eqref{eq:14}, and $\Xh$ is a solution of \eqref{eq:14}.
Substituting this into \eqref{eq:36}, cancelling $\nmN{X}$ from
both sides, and rearranging terms, gives
\beq\label{eq:37}
 \nmN{H_\OM} & \geq &
 (1-\nmS{\P_{\T^\perp}(Y)})\nmN{\P_{\T^\perp}(H_{\OM^c})}
 \nonumber \\
 & & \ - \nmF{UV^T-\P_\T (Y)}\nmF{\P_\T(H_{\OM^c})}
\eeq
Now,
\begin{align}\label{eq:38}
& \nmF{E_\OM\circ \P_\T(H_{\OM^c})}^2 \nonumber \\
& =  \IPF{E_\OM\circ \P_\T(H_{\OM^c})}{\P_\T(H_{\OM^c})}
\nonumber \\
& =  \IPF{\P_\T E_\OM\circ\P_\T(H_{\OM^c})-\al \P_\T(H_{\OM^c})}{\P_\T(H_{\OM^c})} \nonumber \\
& \ + \al \IPF{\P_\T(H_{\OM^c})}{\P_\T(H_{\OM^c})} \nonumber \\
& \geq^{(a)} \al \nmF{\P_\T(H_{\OM^c})}^2 \nonumber \\
& \ - \nmF{ \PT (E_\OM \circ \PT (\HOMc) - \al \PT(\HOMc))} \nmF{\P_\T(H_{\OM^c})} \nonumber \\
& \geq^{(b)} \al \nmF{\P_\T(H_{\OM^c})}^2
- \al k_2 \nmF{\P_\T(H_{\OM^c})}^2 \nonumber \\
& = \al (1 - k_2) \nmF{\P_\T(H_{\OM^c})}^2 ,
\end{align}
where $(a)$ follows from $\PT^2 = \PT$, and $(b)$ follows from
assumption \eqref{eq:105a}.
Next, note that
$E_\OM \circ \HOMc = 0 $, which in turn implies that
\bd
E_\OM \circ \PT \HOMc = - E_\OM \circ \PTP \HOMc ,
\ed
so that
\bd
\nmF{E_\OM \PT \HOMc } = \nmF{E_\OM \PTP \HOMc } ,
\ed
Therefore
\begin{align}\label{eq:39}
\nmN{\P_{\T^\perp}(H_{\OM^c})}
& \geq \nmF{\P_{\T^\perp}(H_{\OM^c})} \nonumber\\
& \geq
\nmF{E_\OM\circ \P_{\T^\perp}(H_{\OM^c})} \nonumber \\
& = \nmF{E_\OM\circ \P_\T(H_{\OM^c})} \nonumber \\
& \geq (\al(1 - k_2))^{1/2} \nmF{\P_\T(H_{\OM^c})}
\end{align}
where the last step follows from \eqref{eq:38}.
Now using \eqref{eq:39} in \eqref{eq:37} gives us
\begin{align}\label{eq:40}
\nmN{H_\OM} & \geq
(1-\nmS{\P_{\T^\perp}(Y)})\nmN{\P_{\T^\perp}(H_{\OM^c})}
\nonumber \\
&  \ - (\al (1-k_2))^{-1/2} \nmF{UV^T-\P_\T (Y)}\nmN{\P_{\T^\perp}(H_{\OM^c})}\nonumber\\
& \geq [ (1 - k_1) - (\al(1-k_2))^{-1/2} k_3 ] \nmN{\P_{\T^\perp}(H_{\OM^c})}\nonumber\\
& = c \nmN{\P_{\T^\perp}(H_{\OM^c})} ,
\end{align}
where we use the assumptions \eqref{eq:105} and \eqref{eq:106a}, together 
with \eqref{eq:39}.
Next
\begin{align}\label{eq:41}
\nmF{\P_{\T^\perp}(H_{\OM^c})}
& \leq \nmN{\P_{\T^\perp}(H_{\OM^c})}
\leq^{(a)} (1/c) \nmN{H_\OM} \nonumber\\
& \leq^{(b)} (\sqrt{n_r}/c) \nmF{H_\OM}
\leq^{(c)} 2 (\sqrt{n_r}/c) \e  ,
\end{align}
where $(a)$ follows from \eqref{eq:40},
$(b)$ follows from the fact that $H_\OM \in \Rno$
and thus has rank no more than $\min \{ n_r , n_c \} = n_r$, and
$(c)$ follows from \eqref{eq:32}.
Using \eqref{eq:38} we get
\begin{align}\label{eq:42}
\nmF{\P_\T(H_{\OM^c})}
& \leq (\al(1-k_2))^{-1/2}\nmF{E_\OM\circ \P_\T(H_{\OM^c})}
\nonumber\\
& =^{(a)} (\al(1-k_2))^{-1/2}\nmF{E_\OM\circ \P_{\T^\perp}(H_{\OM^c})}
\nonumber\\
& \leq (\al(1-k_2))^{-1/2}\nmF{\P_{\T^\perp}(H_{\OM^c})}
\nonumber\\
\end{align}
where, as above, $(a)$ follows from the fact $E_\OM \circ H_{\OM^c}=0$.
Now \eqref{eq:33} can be written as
\begin{align}\label{eq:43}
\nmF{H}^2
& \leq  4\e^2 + \nmF{\HOMc}^2 \nonumber\\
& = 4\e^2 + \nmF{\P_\T (\HOMc)}^2 + \nmF{\PTP (\HOMc)}^2 \nonumber\\
	& \leq^{(a)} 4 \e^2 + \left( 1 + \frac{1}{\al(1-k_2)} \right)
	\nmF{\PTP(\HOMc)}^2 \nonumber \\
	& \leq^{(b)} 4 \e^2 + \left( 1 + \frac{1}{\al(1-k_2)} \right)
	\frac{4 n_r}{c^2} \e^2 ,
\end{align}
where $(a)$ follows from \eqref{eq:42} and $(b)$ follows from \eqref{eq:41}.
It now follows that
\bd
\nmF{H} \leq 2 \left[ 1 + \frac{n_r}{c^2}
\left( 1 + \frac{1}{\al(1-k_2)} \right) \right]^{1/2} \e ,
\ed
which is \eqref{eq:106c}.
\end{proof}

\begin{proof}
(Of Theorem \ref{thm:35})
At last we come to the proof of the main theorem.
Suppose $\th + \phi < 1$ (which automatically implies that $\phi < 1$),
and define
\bd
	k_1 = \phi , k_2 = \th + \phi , k_3 = \sqrt{r(\th^2+\phi^2)} .
\ed
The proof consists of showing that, under the stated hypotheses,
there exists a matrix $Y \in \Rno$ that satisfies the hypotheses of
Lemma \ref{lemma:31}.

We start with \eqref{eq:105a}.
	Lemma \ref{lemma:83} states the following:
If $Z\in\T$ and $\Zb:= (1/\al)\P_T E_\OM\circ Z-Z$,
then
\be\label{eq:301}
\nmF{\Zb} \leq (\theta + \phi) \nmF{Z}.
\ee
where $\th$ is defined in \eqref{eq:23} and $\phi$ is defined in \eqref{eq:80aa}.
Therefore \eqref{eq:105a} is satisfied with $k_2 = \th + \phi$.
Next, let $W_0 := UV^\top \in \T$, and define
\bd
Y = \foal E_\OM \circ W_0 .
\ed
Then
\bd
\PT (Y) - UV^\top = \foal \PT E_\OM \circ W_0 - W_0 .
\ed
Now we can write $W_0 = UB^\top$ where $B = V$, and apply Lemma
\ref{lemma:82}.
This gives
\bd
\nmF{\PT (Y) - W_0} \leq \sqrt{r(\th^2+\phi^2)} = k_3 .
\ed
Finally, observe that, because $W_0 \in \T$, we can reason as follows:
\begin{eqnarray*}
\nmS{ \PTP Y} & = & \nmS{\PTP(Y - W_0)} \\
& \leq & \nmS{Y - W_0} = \nmS{ M \circ W_0} .
\end{eqnarray*}
Now we can apply Lemma \ref{lemma:52}, with
\bd
a = \sqrt{ \frac{\mu_0 r}{n_r}} ,
b = \sqrt{ \frac{\mu_0 r}{n_c}} ,
\nmS{M} = \frac{\s_2}{\al} .
\ed
This gives
\bd
\nmS{\PTP Y} \leq \frac{\s_2}{\al} \frac{\mu_0 r}{\sqrt{n_r n_c}} = \phi .
\ed
Hence we can choose $k_1$.
The theorem now follows from applying Lemma \ref{lemma:31}
\end{proof}

There is considerable scope for improving Theorem \ref{thm:35}.
As shown earlier, we can take $k_2 = \th+\phi$.
Define $W_0 = UV^T$ as before, and define $W_i$ as
\be\label{eq:102}
W_i = W_{i-1} - (1/\al) \P_\T(E_\OM \circ W_{i-1}) 
\ee
\be\label{eq:103}
Y_p = \sum_{i=0}^{p-1} (1/\al)E_\OM \circ W_i
\ee
Then $E_\OM \circ Y_p = Y_p$ for every integer $p$.
To establish \eqref{eq:106}, note that
\bd
\foal \P_\T E_\OM \circ W_i = W_i - W_{i+1} .
\ed
So
\bd
\P_\T Y_p = \sum_{i=0}^{p-1} (W_i - W_{i+1}) = W_0 - W_p .
\ed
Therefore
\bd
\nmF{\P_\T Y_p - W_0} = \nmF{W_p} \leq (\th + \phi)^p \nmF{W_0} .
\ed
If $\th+\phi < 1$, then \eqref{eq:106} holds for $p$ sufficiently large,
no matter what $k_3$ is.
The difficulty however is to find a constant $k_1$ such that \eqref{eq:105}
is satisfied.
This is an object of ongoing research by the authors.

\section{Phase Transition Studies}\label{sec:Phase}

The bounds in Theorem \ref{thm:35} provide sufficient conditions
for matrix completion using nuclear norm minimization,
when the sample set  is chosen as the edge set of
a Ramanujan graph or a Ramanujan bigraph.
These results are only sufficient conditions.
It is possible to determine how close these sufficient conditions
are to being necessary via numerical simulations.
That is the objective of the present section.
Our simulations show that choosing $d \approx 3r$
seems to suffice to recover randomly generated matrices of rank $r$.
Thus $r_0 \approx d/3$ is the critical value below which the success
ratio is 100\% for recovering randomly generated matrices of rank $r \leq r_0$.
Moreover, the simulations show the presence of a ``phase transition,''
whereby the likelihood of success
falls sharply from 100\% to 0\% as $r$ is increased by just $2$ or $3$
above the critical value $r_0$.
These observations are consistent with previous numerical studies
on phase transition.
Our numerical results are presented next, and they are placed in context
against earlier results in the subsequent subsection.

\subsection{Numerical Experiments}

We carried out some numerical experiments
on the behavior of nuclear norm minimization for matrix completion,
on randomly generated low rank matrices.
To construct Ramanujan graphs to choose the sample matrices,
we used the so-called LPS construction proposed in
\cite{Lubotzky-et-al88}.
This construction is based on choosing two prime numbers $p, q$ both
congruent to $1 \mod 4$.
The resulting graph has $(q(q^2-1))/2$ vertices and degree $p+1$.
In the original construction, $p < q$.
However, it is possible to choose $p > q$ provided some consistency
conditions are satisfied.
The authors have written {\tt Matlab} code that implements the LPS
construction; this code is available from the authors upon request.

For illustrative purposes,
we chose $q = 13$, which leads to graphs with $n = 1,092$ vertices.
By varying the prime number $p$, we could generate
Ramanujan graphs with varying degrees.
Every prime number equal to $1 \mod 4$ between $5$ and $157$
results in a Ramanujan graph using the LPS construction.
However, still larger choices of $p$ are permissible, such as
$197, 229$ and $293$.
However, for some intermediate values of $p$ between $157$ and $197$,
the resulting LPS construction would have multiple edges between some
pairs of vertices.

For each choice of $p$, (that is, each choice of the degree $d$),
we varied the rank $r$ from 1 upwards, and for each $r$ generated
100 random matrices of dimensions $1092 \times 1092$ and rank $r$.
Then we applied nuclear norm minimization, and examined what fraction
of the 100 rank $r$ matrices were correctly recovered.
The criterion for correct recovery was that the normalized Frobenius
norm $\nmF{\Xh-X}/\nmF{X}$ was less than $10^{-6}$.
However, the results are quite insensitive to this number.
The objective was to
determine how the critical value of the rank $\bar{r}$, defined as the value 
below which 100\% recovery takes place,
depends on $d = p+1$.
The results are shown in Figure \ref{fig:72}.
It can be observed from this figure that
$\bar{r} \approx d/3 = (p+1)/3$ over the entire range of $p$.
For comparison purposes, we repeated the study, with the Ramanjuan graph
sample set replaced by randomly chosen samples.
There is very little difference between the two curves.
In a sense this is not surprising, because Ramanujan graphs replicate
many of the desirable properties of random graphs, including
expansion properties.
However, the Ramanujan construction provides a systematic method for
selecting the sample set.

More interesting was the fact that, if $r$ was increased by just one two
above the critical value $\bar{r}$,
the percentage of accurately recovered matrices
fell sharply from 100\% to 0\%.
This is illustrated in Figure \ref{fig:73} for $p = 197$ or $d = 198$,
in Figure \ref{fig:74} for $p = 229$ of $d = 230$,
and in Figure \ref{fig:75} for $p = 293$ or $d = 294$.
This phenomenon is known as ``phase transition,'' and is well-known
for vector recovery using $\ell_1$-norm minimization.
However, so far as we are aware, a similar phenomenon has not been reported
for matrix completion using nuclear norm minimization.

\bfig
\bc
\includegraphics[height=70mm]{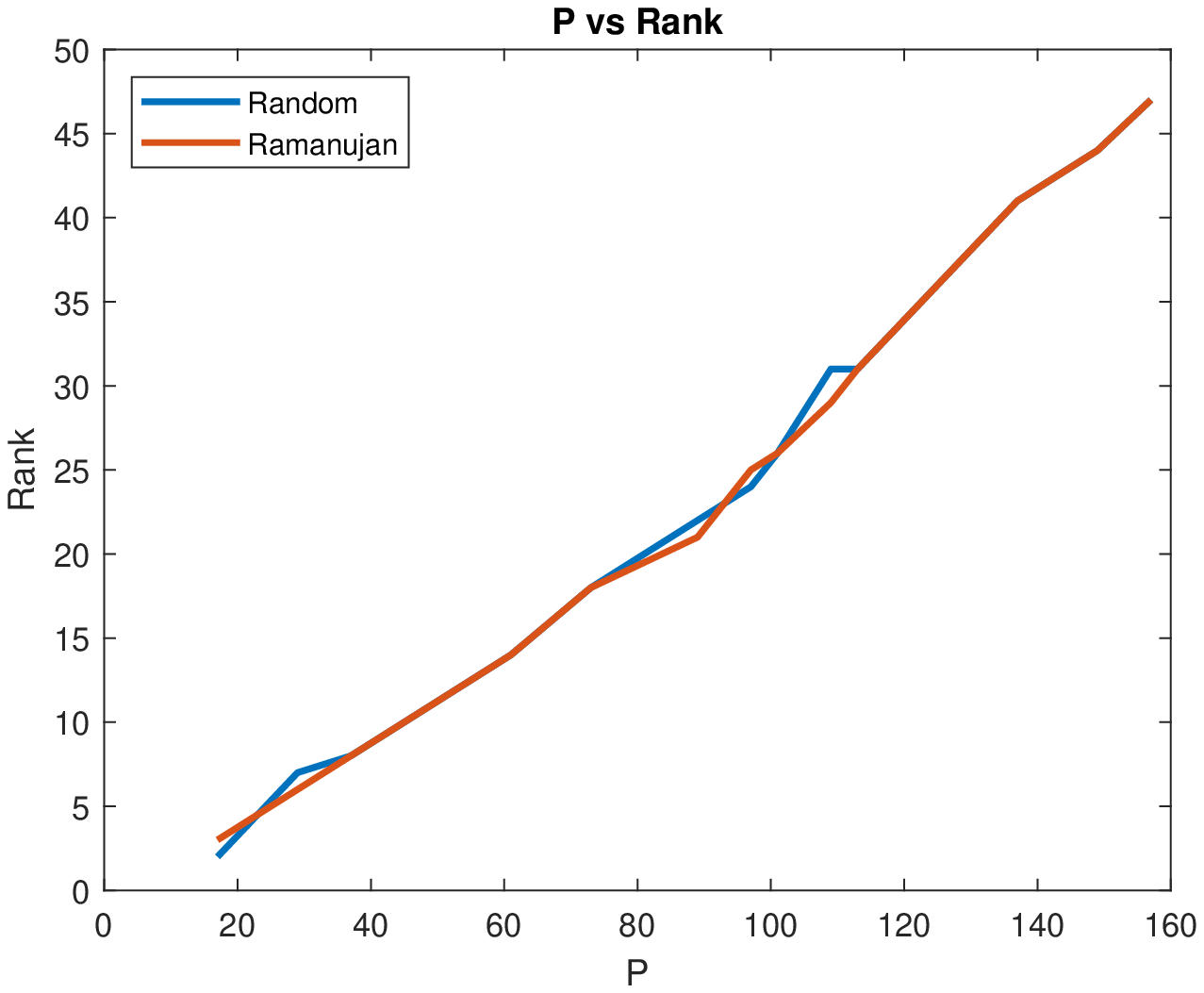}
\ec
\caption{Critical value of rank versus the degree of the LPS Ramanujan graph
with 1,092 vertices}
\label{fig:72}
\efig

\begin{table}
\bc
\btab{|c|c|c|}
\hline
$d$ & $\bar{r}$ & $\bar{r}/d$ \\
\hline
198 & 62 & 0.3147 \\
230 & 75 & 0.3275 \\
294 & 102 & 0.3481 \\
\hline
\etab
\ec
\caption{Degree vs.\ critical rank in high degree LPS Ramanujan graphs}
\end{table}

\bfig
\bc
\includegraphics[height=70mm]{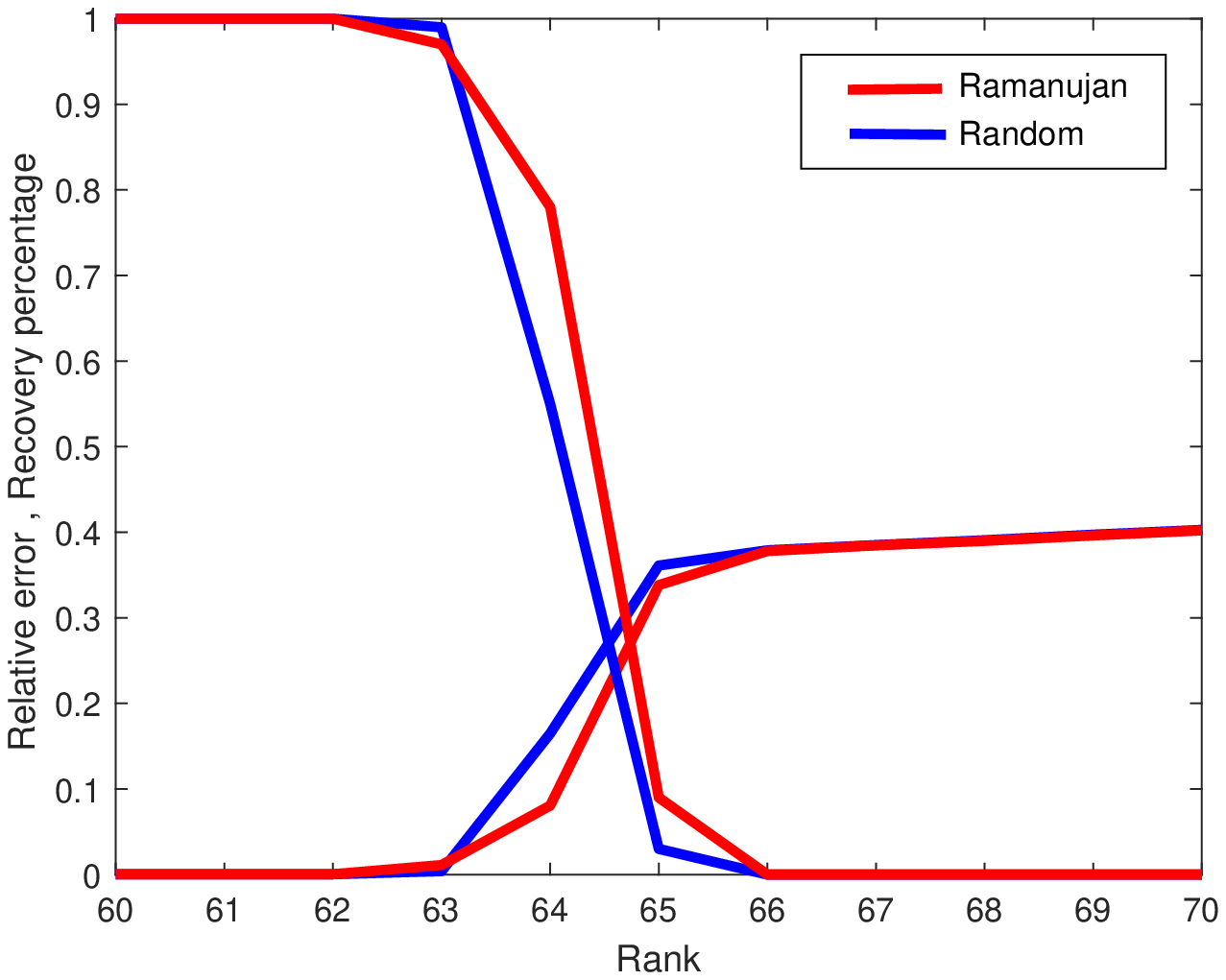}
\ec
\caption{Phase transition behavior in LPS graph
with 1,092 vertices with $p = 197$}
\label{fig:73}
\efig

\bfig
\bc
\includegraphics[height=70mm]{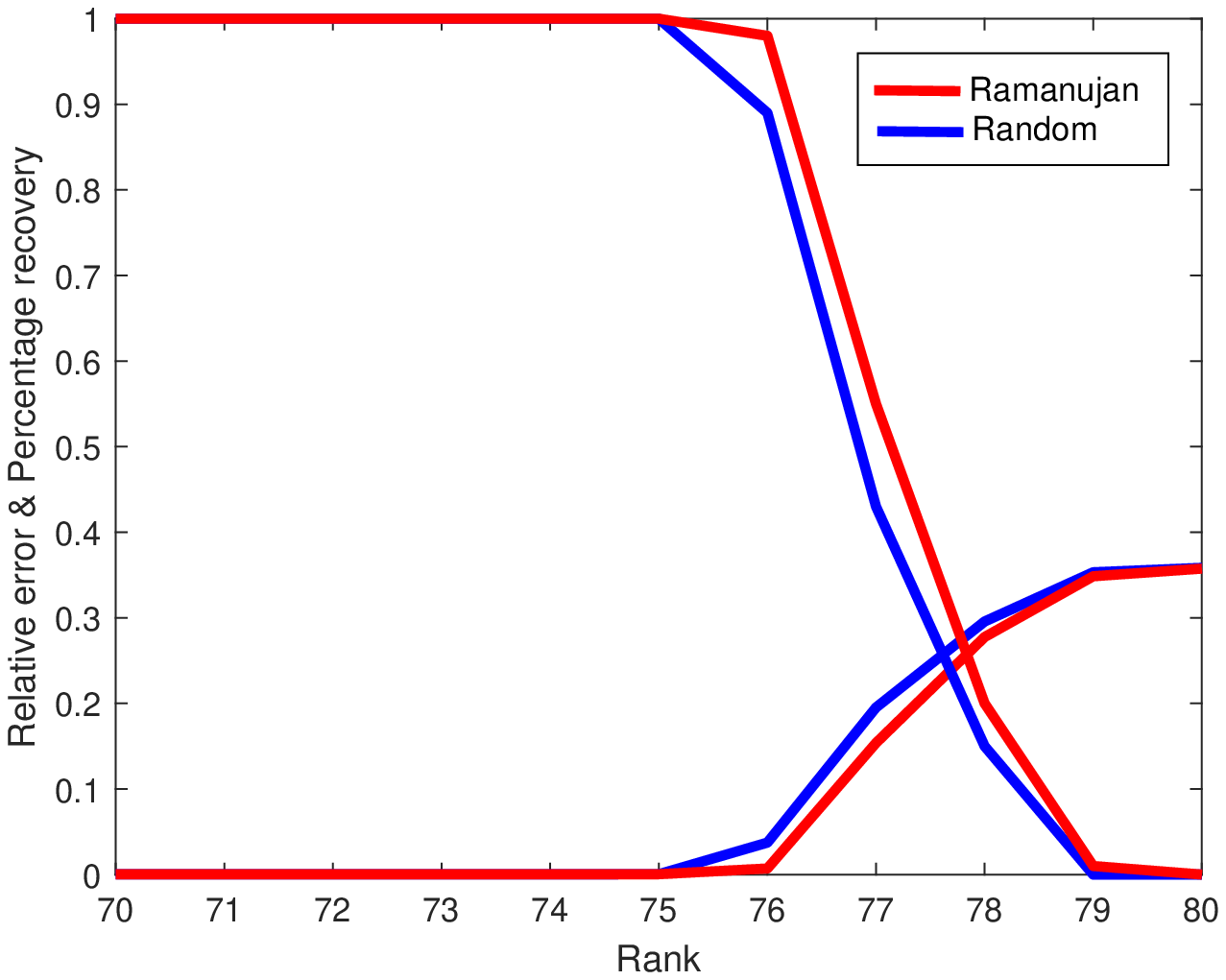}
\ec
\caption{Phase transition behavior in LPS graph
with 1,092 vertices with $p = 229$}
\label{fig:74}
\efig

\bfig
\bc
\includegraphics[height=70mm]{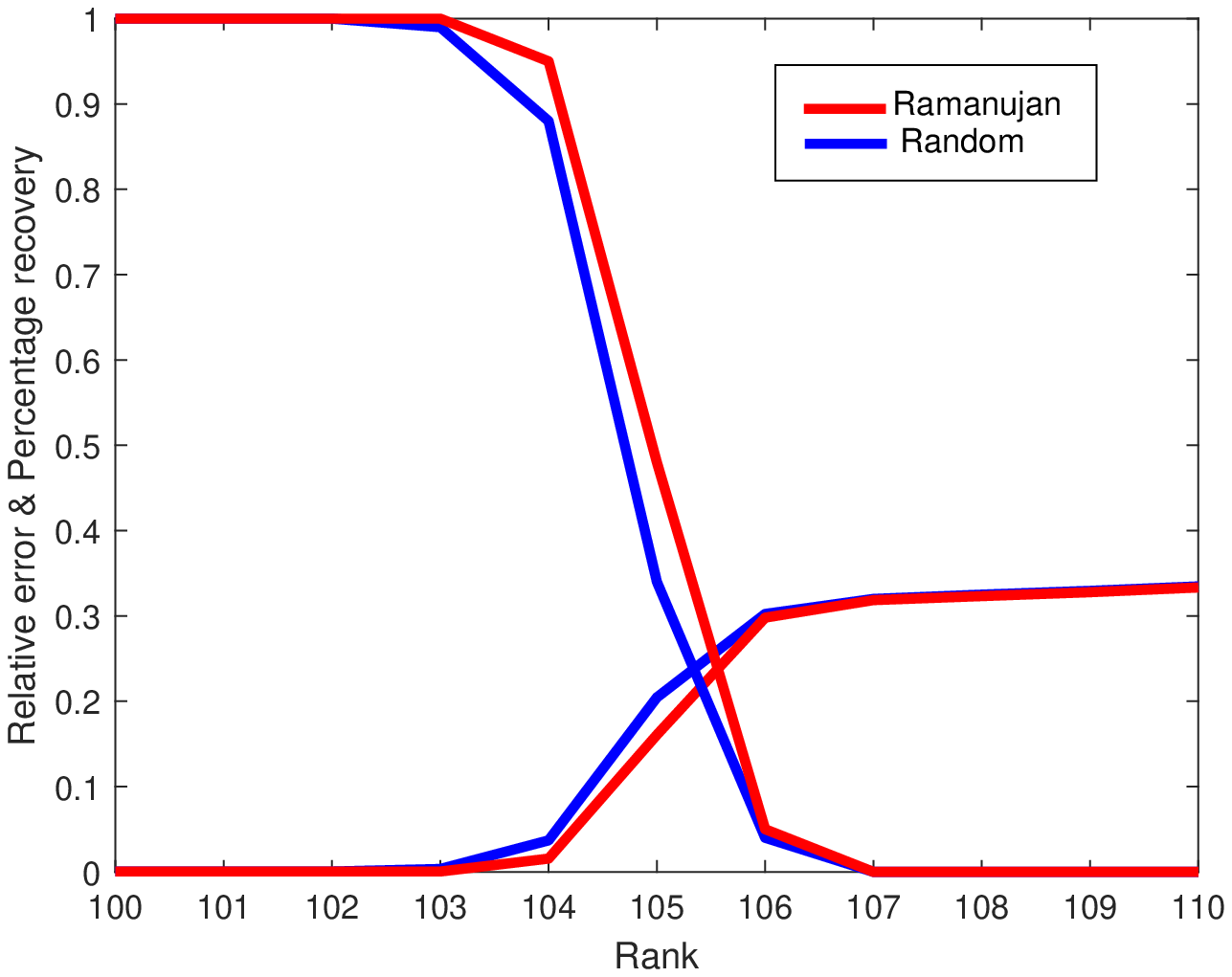}
\ec
\caption{Phase transition behavior in LPS graph
with 1,092 vertices with $p = 293$}
\label{fig:75}
\efig

\subsection{Comparison with Previous Studies}

The phenomenon of phase transition is well-known in the case of
vector recovery using $\ell_1$-norm minimization.
It is also well-understood, and is
discussed in several papers by David Donoho, such as
\cite{Donoho-Tanner-PNAS05,Donoho-et-al-PNAS09,Donoho-Tanner-JAMS09,
Donoho-Tanner-PTRSA09,Donoho-Tanner10,Donoho-et-al-ISIT12,Donoho-et-al-TIT13a,
Donoho-et-al-TIT13b}.
A very general theory of the behavior of convex optimization on
randomly generated data is given in \cite{Amelunxen-et-al14}.
A concept called the ``statistical dimension'' is introduced, and it is
established that convex optimization exhibits ``phase transition,''
whereby the success rate of the optimization algorithm goes from 100\%
to 0\% very quickly as the input to the algorithm is changed.
The width of the transition region is linear in the statistical dimension.
In contrast, there are relatively fewer papers that have studied
phase transition in matrix completion, and a proper theory is not
yet available.
In principle, the theory in \cite{Amelunxen-et-al14}
applies also to matrix completion using nuclear norm minimization.
However, it is not easy to work out the appropriate statistical dimension
in this case.
In \cite{P-ABN16}, the behaviour of the manifold approach to
recovering randomly generated matrices is studied.
A related paper is \cite{Donoho-et-al-PNAS13}, which studies matrix
\textit{recovery}, and not matrix \textit{completion}.
In that paper, the measurements consist of taking the Frobenius inner product
of the unknown matrix with randomly generated Gaussian matrices.

Now we describe some of the prior work on phase transition in matrix
completion via nuclear norm minimization.
In all the studies described below, the unknown matrix $X$
is assumed to be square, so we use the neutral symbol $n = n_c = n_c$
to denote its size.
Throughout, $r$ denotes its rank, and $m$ denotes the number of
measurements, that is, $|\OM|$.
This is consistent with the notation in the present paper, though
the notation in the papers discussed below is in general different.
Note that in some papers, $X$ satisfies additional constraints such as
symmetry, positive semidefiniteness, Hankel or Toeplitz structure, etc.
In \cite{Candes-Recht08}, which first introduced this approach,
there are some examples based on randomly generated low-rank matrices
with $n = 40$ or $n = 50$.
In \cite{KMO10a}, $n = 500$, and $r \in \{ 10, 20, 40 \}$.
In \cite{Chen-Chi-TIT14}, $n = 11$ or $n = 15$.
In \cite{Chen-Chi-TIT15}, $n = 50$.
In \cite{ZHWC-STSP18}, $n = 300$.
Finally, in \cite{YFS-Ent18}, $n = 1,000$, and $r/n \in [0.04,0.4]$.
In all cases, it is observed that the critical rank $r_0$ at which
the recovery ratio drops sharply from 100\% to 0\% is essentially
linear in $m$, the number of measurements.
In our setting, since $m = dn$ where $d$ is the degree of the Ramanujan
graph, this translates to $r_0$ being linear in $d$, as observed in the
next subsection.
Thus, to summarize, available numerical studies (including ours),
indicate the presence of a phase transition, and a linear relationship
between the critical rank $r_0$ and the number of measurements $m$.

\section{Conclusions and Future Work}\label{sec:Conc}

In this paper we have studied the matrix completion problem with emphasis
on choosing the elements to be sampled in a deterministic fashion.
We do this by choosing the sample matrix to equal the biadjacency matrix
of an asymmetric Ramanujan graph, or Ramanujan bigraph.
We have derived a sufficient
condition that guarantees \textit{exact} recovery of the unknown matrix
with noise-free measurements using nuclear norm minimization,
and \textit{stable} recovery under non-sparse noisy measurements.
We believe that we are presenting the very first correct result on
exact recovery using nuclear norm minimization and a deterministic sampling
pattern.
An earlier paper \cite[Theorem 4.2]{Bhoj-Jain14} claims a similar result,
but there is one step in the proof that we believe is not justified.
This is elaborated in the Appendix.

The sufficient condition given here is very conservative.
It requires that the degree of the Ramanujan graph should be $\OM(r^3)$
where $r$ is the rank of the matrix to be recovered.
Turning this around, our result implies that given a Ramanujan graph of
degree $d$, we can guarantee exact recovery only when $r = O(d^{1/3})$.
This naturally raises the question as to
how close is the sufficient condition derived here to being necessary?
Our numerical simulations have shown that $d \approx 3r$
seems to be sufficient to recover randomly generated matrices of rank $r$.
More interesting, if $r$ is increased by just two or three above
this critical value of $d/3$, the percentage of the randomly
generated matrices that are recovered falls from 100\% to 0\%,
a phenomenon known as ``phase transition.''

One of the advantages of the random sampling approach is that it is
relatively easy to account for ``missing'' measurements, by ensuring
that the missing location is never sampled.
In the present approach, this leads to a very interesting problem in graph
theory, namely the construction of Ramanujan graphs when there is a ``bar''
on having edges between specified pairs of vertices.
The authors are exploring this question, which would be of considerable
interest to graph theorists, quite apart from the matrix completion
researchers.

\section*{Appendix}

In this appendix we point out an error in the proof of
\cite[Theorem 4.2]{Bhoj-Jain14}, which
is based on a recursion lemma \cite[Lemma 7.3]{Bhoj-Jain14}.
However, the proof of the recursion lemma is not given in \cite{Bhoj-Jain14},
but is found in the longer version of the paper in \cite{Bhoj-Jain-arxiv14}.
On page 19 of \cite{Bhoj-Jain-arxiv14}, the third displayed (but unnumbered)
equation is as follows:\footnote{For the convenience of the reader, we use the
notation in \cite{Bhoj-Jain-arxiv14}, as opposed to the current notation.
Also, while the equation in question
is shown on a single line in \cite{Bhoj-Jain-arxiv14},
it is broken into several lines here to fit into the double-column format.}
\beq\label{eq:ap02}
\sum_{k = 1}^{r} \nmeusq{\uh.U_k} 
& = &  \sum_{k = 1}^{r} \sum_{l = 1}^{n} \IP{\Up^i}{\Up^l}^2 U_{lk}^2 \nonumber\\
& = & \sum_{l = 1}^{n} \IP{\Up^i}{\Up^l}^2 \nmeusq{U^l} \nonumber\\
& =^{(a)} & \sum_{l = 1}^{n} \IP{U^i}{U^l}^2 \nmeusq{U^l} \nonumber\\
& \leq & (\mu_0 r/n)^2,
\eeq
where $\uh = \Up\Up^i$. 

Let us focus on the claimed equality highlighted by us as $(a)$.
This equation is incorrect.
If we represent $\Up\Upt$ as $ (I_{n_r}-U\Ut)$, then by orthogonality we have
\bd
\IP{U^i}{U^j} + \IP{\Up^i}{\Up^j} = 0 \fa i \neq j ,
\ed
which in turn implies that
\be\label{eq:ap01}
|\IP{U^i}{U^j}| = |\IP{\Up^i}{\Up^j}| \ \forall i\neq j .
\ee
However, when $i = j$, we have that
\bd
\nmeusq{U^i}+\nmeusq{\Up^i} = 1 \imp \IP{\Up^i}{\Up^i} = 1 - \IP{U^i}{U^i }
\ed
for all $i$.
Therefore 
\bd
\nmeusq{\Up^i} = \nmeusq{U^i} \imp \nmeusq{\Up^i} = \nmeusq{U^i} = 1/2 .
\ed
More elaborately
\begin{eqnarray*}
\sum_{l = 1}^{n} \IP{\Up^i}{\Up^l}^2 \nmeusq{U^l}
& = & \nmeu{\Up^i}^4 \nmeusq{U^i} \\
& + & \sum_{l \neq i} \IP{\Up^i}{\Up^l}^2 \nmeusq{U^l} \\
& = & (1 - \nmeusq{U^i})^2 \nmeusq{U^i} \\
& + & \sum_{l \neq i} \IP{U^i}{U^l}^2 \nmeusq{U^l} \\
& = &  \sum_{l = 1}^{n} \IP{U^i}{U^l}^2 \nmeusq{U^l} \\
& + & (1 - 2 \nmeusq{U^i}) \nmeusq{U^i} .
\end{eqnarray*}
Therefore, unless $\nmeusq{U^i} = 1/2$ or zero, the step highlighted as $(a)$
in \eqref{eq:ap02} is false for that value of $i$.
The conclusion is that, in order for \eqref{eq:ap02} to hold,
\textit{every} row of $U$ must either be identically zero, or have
Euclidean norm of $1/2$.
A row of $U$ being identically zero makes the corresponding row of the
unknown matrix $X$ also identically zero.
In short, \eqref{eq:ap01} cannot hold except under extremely restrictive
and unrealistic conditions.
Similar reasoning is used for $V$ which is also not correct.
Hence using this Lemma 7.3 in bounding $\nmS{\PTP(Y)}$ on page 12 
of \cite{Bhoj-Jain-arxiv14} makes
the proof of the Theorem 4.2 incorrect.
However, it is of course possible that the theorem itself is correct.

\section*{Acknowledgement}

The authors thank the Associate Editor and three anonymous reviewers for
drawing their attention to several relevant references, and for their
constructive criticisms that have greatly enhanced the paper.
In addition,
the authors thank Prof.\ Alex Lubotzky of Hebrew University, Jerusalem,
Israel for useful hints on the construction of Ramanujan graphs of high degree,
and Prof.\ Cristina Ballantine of the College of the Holy Cross for
discussions on the construction of Ramanujan bigraphs.

\bibliographystyle{IEEEtran}

\bibliography{Comp-Sens}

\end{document}